\DeclareMathOperator{\dom}{dom}
\DeclareMathOperator*{\argmax}{argmax}
\definecolor{blu}{rgb}{0,0,1}
\definecolor{gre}{rgb}{0,.5,0}
\definecolor{bonus}{rgb}{.9,.9,1}
\def\blu#1{{\color{blu}#1}}
\def\norm#1{\|#1\|}
\newtheorem{assumption}{Assumption}
\newcommand{\nn}{\nonumber}
\begin{document}

\title{Fast Convergence of Random Reshuffling under Over-Parameterization and the Polyak-\L ojasiewicz Condition}
%
%
\author{Chen Fan\inst{1} \and
Christos Thrampoulidis\inst{2} \and
Mark Schmidt\inst{1}}
%
%
\institute{Department of Computer Science \and
Department of Electrical and Computer Engineering \\
University of British Columbia \\  
Vancouver, British Columbia, Canada \\
\email{\{fanchen2,schmidtm\}@cs.ubc.ca,\{cthrampo@ece.ubc.ca\}}}
\maketitle              
%
\begin{abstract}
Modern machine learning models are often over-parameterized and as a result they can interpolate the training data. Under such a scenario, we study the convergence properties of a sampling-without-replacement variant of stochastic gradient descent (SGD) known as random reshuffling (RR). Unlike SGD that samples data with replacement at every iteration, RR chooses a random permutation of data at the beginning of each epoch and each iteration chooses the next sample from the permutation. For under-parameterized models, it has been shown RR can converge faster than SGD under certain assumptions. 
However, previous works do not show that RR outperforms SGD in  over-parameterized settings except in some highly-restrictive scenarios. For the class of Polyak-\L ojasiewicz (PL) functions, we show that RR can outperform SGD in over-parameterized settings when either one of the following holds: (i) the number of samples ($n$) is less than the product of the condition number ($\kappa$) and the parameter ($\alpha$) of a weak growth condition (WGC), or (ii) $n$ is less than the parameter ($\rho$) of a strong growth condition (SGC). 

\end{abstract}
\section{Introduction}
We consider finite-sum minimization problems of the form 
\begin{align}
    \min \biggl\{ f(x) = \frac{1}{n}\sum_{i=1}^n f(x;i) \biggr\} \label{eq:p}. 
\end{align}
Stochastic gradient descent (SGD) is a popular algorithm for solving machine learning problems of this form. A significant amount of effort has been made to understand its theoretical and empirical properties \cite{bottou2018optimization}.
SGD has a simple update rule in which a sample $i_k$ is chosen randomly with replacement and at each iteration we compute $x^{k+1} = x^k - \eta^k \nabla f(x^k;i_k)$. This is cheaper than using the full gradient at each iteration.
However, it is well known that the convergence rate of SGD 
can be much worse than the convergence rate of full-gradient descent. For example, for strongly-convex functions SGD has a sublinear convergence rate while full-gradient descent has a linear convergence rate.

Given the increasing complexity of modern learning models, a practically-relevant question to ask is how SGD performs in over-parameterized settings, under which the model interpolates (exactly fits) the data. 
Previously, it has been shown that SGD can achieve a linear convergence rate like full-gradient descent under various interpolation conditions for strongly-convex functions~\cite{moulines2011non,schmidt2013fast,needell2014stochastic,vaswani2019fast} such as the strong growth condition (SGC) and the weak growth condition (WGC).
An assumption that is weaker than strong convexity which allows full gradient descent to achieve a linear rate is the Polyak-\L ojasiewicz (PL) condition~\cite{polyak1963gradient}.
Recently, the PL condition has gained popularity in machine learning~\cite{karimi2016linear} and it has been shown that several overparameterized models that interpolate the data satisfy the PL condition \cite{bassily2018exponential}. Similar to the strongly-convex case, under interpolation and the PL condition SGD can achieve a linear rate similar to full gradient descent \cite{bassily2018exponential,vaswani2019fast}.

A popular variation on SGD is random reshuffling (RR).
At each epoch $t \in \{1,2,...,T\}$, the algorithm RR randomly chooses a permutation $\pi^t$. 
That is,  $\pi^t_{j+1}$ is sampled without replacement from the set $\{1,2,...,n\}$ for $j \in \{0,1,...,n-1\}$. Then it performs the following update, going through the dataset in the order specified by $\pi^t$ 
\begin{align}
    x_{j+1}^t = x_j^t - \eta_j^t\nabla f(x_j^t; \pi^t_{j+1}). \label{eq:update2}
\end{align}
Note that $x_0\triangleq x^1_0$ is the initialization and $x_0^{t+1}=x_n^t$ $\forall t\geq 1$. We summarize the method in Algorithm~\ref{alg:RR}.
A variation on RR is the incremental gradient (IG) method where $\pi^t$ is deterministic and fixed over all epochs.

\begin{algorithm}[H] 
   \caption{Random Reshuffling (RR)}
    \textbf{Input:} $x_0$, $T$, step sizes $\{ \eta_j^t\}$    
\begin{algorithmic}[1] \label{alg:rr}
  \For{$t = 1, 2, \ldots, T$}
    \\
    \quad Choose a permutation $\pi^t$ from the set of all permutations; set $x_0^t = x_0$ if $t=1$; set $x_0^t = x_n^{t-1}$ if $t > 1$.  
     \For{$j = 1,2, \ldots, n-1$}
     \State $x_{j+1}^t = x_j^t - \eta_j^t\nabla f(x_j^t; \pi^t_{j+1})$ \label{eq:update1} 
 \EndFor
\EndFor
\end{algorithmic}
\label{alg:RR}
\end{algorithm}

 
RR has long been known to converge faster than SGD empirically for certain problems~\cite{bottou2009curiously,bottou2012stochastic}. However, analyzing RR is more difficult than SGD because (conditioned on the past iterates) each individual gradient is no longer an unbiased estimate of the full gradient. Thus, the analysis of RR has only emerged in a series of recent efforts~\cite{haochen2019random,nagaraj2019sgd,safran2020good,gurbuzbalaban2021random}. Previous works have shown that RR outperforms SGD
for strongly-convex objectives in various under-parameterized settings, when the the number of epochs ($T$) is sufficiently large. 
However, these sublinear rates for RR are slower than the linear rates obtained for SGD in the over-parameterized setting. Further, current convergence rate analyses for RR in the over-parameterized setting either make unrealistic assumptions (see the next section) or are also slower than SGD unless we make very-strong assumptions.

In this work, we analyze RR and IG for PL functions under both the SGC and WGC over-parameterized assumptions.
We give explicit convergence rates for RR that can be faster than the best known rate for SGD, under realistic assumptions that hold for situations like over-parameterized least squares.
Our results also show IG can converge faster than SGD (though not RR) in some settings. We consider relaxations of the SGC and WGC where interpolation is only approximately satisfied, showing faster convergence of RR to a neighbourhood of the solution.


\section{Related Work}
\paragraph{Optimization under the PL condition:} the PL inequality 
was first explored by Polyak~\cite{polyak1963gradient} and \L{}ojasiewicz~\cite{lojasiewicz1963topological}. It applies to a wide range of important machine learning problems such as least square and logistic regression (over a compact set) \cite{karimi2016linear}. More generally, any function of the form $f(x) = g(Ax)$ for a matrix $A$ with a $\mu$-strongly convex funciton $g$ satisfies the $\mu$-PL condition \cite{karimi2016linear}. 
Several recent works have argued that considering a local PL condition around minimizers can be used as a model for 
analyzing the effectiveness of SGD in training neural networks \cite{du2018gradient,liu2022loss,oymak2019overparameterized,soltanolkotabi2018theoretical}.

Polyak~\cite{polyak1963gradient} showed that full gradient descent on smooth objective functions can achieve a linear convergence rate under the PL condition. But it has recently been highlighted that the PL condition can be used to show linear convergence rates for a variety of methods \cite{karimi2016linear}. Typically, the PL condition leads to similar convergence rates as those obtained under the stronger condition of strong convexity. In the case of SGD under interpolation, it has been shown that the rate of SGD under the PL condition is linear \cite{bassily2018exponential,vaswani2019fast}. However, the convergence rates for SGD under interpolation for $\mu$-PL functions are slower than those for strongly convex functions~(see Table \ref{table:summ1}).


\paragraph{RR for Strongly-Convex Under-Parameterized Problems:} 
in this paragraph we focus our discussion on the case of strongly-convex functions, the subject of most literature on the topic. 
Bottou conjectured that the convergence rate of RR is~$O(\frac{1}{n^2 T^2})$ \cite{bottou2009curiously}, where $T$ is the number of epochs. But it was several years before progress was made at showing this.
The difficulty in the analysis of RR arises because of the bias in the conditional expectation of the gradients, 
\begin{align}
    \mathbb{E} \bigr[\nabla f(x^t_i;\pi^t_{i+1}) \;|\; x_i^t \bigl] \neq \nabla f(x_i^t). 
\end{align}
An early attempt to analyze RR~\cite{recht2012toward} was not successful because their noncommutative arithmetic-geometric mean inequality conjecture was proven to be false \cite{lai2020recht}. An~$\tilde{\mathcal{O}}(\frac{1}{n^2 T^2})$ rate was first shown asymptotically by G\"urb\"uzbalaban et al. \cite{gurbuzbalaban2021random}
Haochen and Sra~\cite{haochen2019random} give the first non-asymptotic convergence result of $\tilde{\mathcal{O}}(\frac{1}{\mu^4}(\frac{1}{n^2T^2}+\frac{1}{T^3}))$ with a strong-convexity constant $\mu$, under strong assumptions. Under weaker assumptions a rate of 
$\tilde{\mathcal{O}}(\frac{1}{\mu^3nT^2})$ was shown when $T\gtrsim \frac{1}{\mu^2}$ by assuming component-wise convexity of each $f(\cdot;i)$~\cite{nagaraj2019sgd}, matching a lower bound of $\Omega(\frac{1}{nT^2})$~\cite{rajput2020closing}.
\footnote{Following previous conventions in the literature \cite{nagaraj2019sgd,safran2021random,cha2023tighter}, we take $\kappa = \Theta(\frac{1}{\mu})$ for comparisons.}
Note that this rate is faster than the $\tilde{\mathcal{O}}(\frac{1}{nT})$ rate of SGD when the large epoch requirement is satisfied. Mischenko et al.~\cite{mishchenko2020random} obtain the same rate of $\tilde{\mathcal{O}}(\frac{1}{\mu^3 nT^2})$ but only require  $T\gtrsim \frac{1}{\mu}$. Their analysis is dependent on the underlying component-wise convexity structure. Ahn et al.~\cite{ahn2020sgd} remove this dependence and obtain the same rate with $T \gtrsim \frac{1}{\mu}$. However, their analysis relies on each $f(\cdot;i)$ being $G$-Lipschitz ($\lVert \nabla f(\cdot;i)\rVert \leq G$ for all $i$), which may require a constraint on problem \eqref{eq:p} and a projection operation is needed to ensure the iterates are bounded \cite{ahn2020sgd,nguyen2021unified}. Nguyen et al.~\cite{nguyen2021unified}  give a unified analysis for shuffling schemes other than RR, while Safran and Shamir~\cite{safran2020good} show a lower bound of $\Omega(\frac{1}{\mu n^2T^2}+\frac{1}{\mu nT^3})$ when $f$ is a sum of $n$ quadratics. In a more recent work, they have shown that RR does not significantly improve over SGD unless $T$ is larger than $\Theta(\frac{1}{\mu})$ in the worst case \cite{safran2021random}. We can also consider other shuffling schemes; Lu et al. \cite{lu2023grab} design the GraB algorithm that achieves a faster rate of $\mathcal{\Tilde{O}}(\frac{1}{\mu^3 n^2 T^2})$ by incorporating previous gradient information into determining the current permutation. Furthermore, the matching lower bound for this algorithm is given by Cha et al.~\cite{cha2023tighter}.
We note that all of these sublinear convergence rates are slower than the linear rates that are possible for SGD on over-parameterized problems.

\paragraph{RR for Strongly-Convex Over-Parameterized Problems:} despite the widespread use of RR for training over-parameterized models, there is relatively little literature analyzing this setting. Haochen and Sra~\cite{haochen2019random} show that the convergence rate of RR is at least as fast as SGD under interpolation even without any epoch requirements. However, their result does not show that a gap in the rates can exist and only applies in the degenerate case where each function is strongly-convex.\footnote{In this setting we could solve the problem by simply applying gradient descent to any individual function and ignoring all other training examples.} Ma and Zhou~\cite{ma2020understanding} show a faster rate than SGD for RR under over-parameterization and a ``weak strong convexity'' assumption, but their result is also somewhat degenerate as their assumption excludes standard problems like over-parameterized least squares.\footnote{The usual ``weak strong convexity'' assumption is that for each $f_i$ the strong convexity inequality holds for the projection onto the minima with respect to $f_i$ which holds for least squares. But Ma and Zhou instead use the projection onto the intersection of this set with the set of minima with respect to $f$, which does not hold in general for least squares}
We can obtain interpolation results under more-realistic assumptions as special cases of the results of Mischenko et al.~\cite{mishchenko2020random} and Nguyen et al.~\cite{nguyen2021unified}. However, in the interpolation setting the rates obtained by these works for convex and strongly-convex functions are slower than the rate obtained by Vaswani et al.~\cite{vaswani2019fast} for SGD~(we summarize the known linear rates under over-parameterization in Table \ref{table:summ1}). 

\paragraph{RR for Polyak-\L{}ojasiewicz Problems:}
several works have presented convergence rates for RR on PL problems~\cite{haochen2019random,ahn2020sgd,mishchenko2020random,nguyen2021unified}.
Haochen and Sra give the first non-asymptotic rate of $\tilde{\mathcal{O}}(\frac{1}{\mu^4}(\frac{1}{n^2T^2}+\frac{1}{T^3}))$ for the under-parameterized setting but requiring each individual function to have Lipschitz continuous Hessian \cite{haochen2019random}. 
Ahn et al. improve the rate to $\mathcal{\tilde{O}}(\frac{1}{\mu^3nT^2})$ when $T \gtrsim \frac{1}{\mu}$ assuming bounded gradients~\cite{ahn2020sgd} but these sublinear rates for the under-parameterized setting are slower than the linear rates of SGD for over-parameterized problems.
The result of Mischenko et al. on $\mu$-PL functions~\cite{mishchenko2020random} can be used to obtain a linear rate under over-parameterization,\footnote{By taking $B=0$ in Mischenko et al.'s Theorem~4.} but only in the degenerate setting where all functions have the same gradient. The result of Nguyen et al. on PL functions~\cite[Theorem~1]{nguyen2021unified} concerns the under-parameterized setting. But in Appendix~\ref{sec:literature_app} we show how the proof of Theorem~1 of Nguyen et al. \cite{nguyen2021unified} can be modified to give a linear convergence rate of RR for the over-parameterized setting. This RR rate can be faster than SGD, but under a more restricted scenario than the results presented in this work (we discuss the precise details in Section~\ref{sec:contributions}).

\section{Assumptions}
In this section, we present the assumptions made in our analyses.
Our first assumption is that the individual and overall functions are smooth and bounded below.
\begin{assumption}[Smoothness and Boundedness]\label{smooth}
The objective $f$ is $L$-smooth and each individual loss $f(\cdot;i)$ is $L_i$-smooth such that $\forall x,x' \in \dom(f)$,
\begin{align}
    \norm{\nabla f(x) - \nabla f(x')} & \leq L \norm{x - x'},\\
   \lVert \nabla f(x;i) - \nabla f(x';i) \rVert &\leq L_i \lVert x - x'\rVert, \forall i. 
\end{align}
We denote $L_{\max} \triangleq \max\limits_{i} L_i$. We assume $f$ is lower bounded by $f^*$, which is achieved at some $x^*$, so $f^* = f(x^*)$. We also assume that each $f(\cdot;i)$ is lower bounded by some $f_i^*$.
\end{assumption} 
Next, we give the definition of the Polyak-\L ojasiewicz (PL) inequality which lower bounds the size of the gradient as the value of $f$ increases.
\begin{assumption}[$\mu$-PL]\label{def_pl}
There exists some $\mu > 0$ such that
\begin{align}
    f(x) - f^* \leq \frac{1}{2\mu}\lVert \nabla f(x) \rVert^2 \quad \forall x \in \dom(f), 
\end{align}
where $f^*$ is the optimal value of $f$. 
\end{assumption}
 All strongly-convex functions satisfy the PL inequality. However, many important functions satisfy the PL inequality that are not necessarily strongly-convex including the classic example of the squared prediction error with a linear prediction function~\cite{karimi2016linear}. 
 PL functions can be non-convex but the condition implies invexity, a generalization of convexity. For smooth functions invexity is equivalent to every stationary point of $f$ being a global minimum (though unlike strongly-convex functions we may have multiple stationary points)~\cite{craven1985invex}.
  Recent literature shows that the PL condition and variants of it can be used to help analyze a variety of complicated models~\cite{bassily2018exponential,du2018gradient,liu2022loss,oymak2019overparameterized,soltanolkotabi2018theoretical}.



Next, we formally define interpolation which is the key property of the optimization problem implied by over-parameterization.
\begin{assumption}[Interpolation]\label{interp}
    We are in the interpolating regime, which we take to mean that
    \begin{align}
        \nabla f (x^*) = 0 \quad \Longrightarrow \quad \nabla f(x^*;i) = 0. 
    \end{align}
\end{assumption}
Thus, by interpolation we mean that stationary points with respect to the function $f$ are also stationary points with respect to the individual functions $f(\cdot;i)$. However, we do not assume that $x^*$ is unique. 
Our results also consider either the strong 
or weak growth conditions~\cite{schmidt2013fast,vaswani2019fast} regarding how the individual gradients change as $x$ moves away from  $x^*$.
\begin{assumption}[SGC]\label{SGC}
There exists a constant $\rho \geq 1$ such that the following holds 
\begin{align}\label{eq:SGC}
    \frac{1}{n}\sum_{i=1}^n \lVert \nabla f(x;i)\rVert^2 \leq \rho \lVert \nabla f(x) \rVert^2, \quad \forall x \in \dom(f). 
\end{align}
\end{assumption}
\begin{assumption}[WGC]\label{WGC}
There exists a constant $\alpha \geq 0$ such that the following holds 
\begin{align}\label{eq:SGC}
    \frac{1}{n}\sum_{i=1}^n \lVert \nabla f(x;i)\rVert^2 \leq 2 \alpha L (f(x) - f(x^*)), \quad \forall x \in \dom(f), 
\end{align}
where $f$ is lower bounded by $f^*$ and $f^* = f(x^*)$.
\end{assumption}
There are a close relationships between interpolation, the SGC, and the WGC. The SGC implies interpolation, while for smooth functions the SGC implies the WGC~\cite[Lemma~5]{mishkin2020interpolation}.
Further, for functions satisfying smoothness and $\mu$-PL~(Assumptions~\ref{def_pl} and~\ref{smooth}) all three conditions are equivalent if the $f_i$ are invex (SGC implies interpolation which implies WGC which implies SGC). In this setting of smooth $\mu$-PL functions with invex $f_i$, 
the constant $\alpha$ in the WGC is upper bounded by~$\frac{L_{\max}}{L}$ and by~$\rho$~\cite[Lemmas~5-6]{mishkin2020interpolation},\footnote{We include the $\alpha \leq \rho$ result in Appendix~\ref{app:SGCWGC} since it is shown under stronger assumptions~\cite{vaswani2019fast} or stated differently~\cite{mishchenko2020random} in prior work.}
while the value of the constant $\rho$ in the SGC is upper bounded by~$\frac{\alpha L}{\mu}$~\cite[Proposition~1]{vaswani2019fast} and is thus also bounded by~$\frac{L_{\max}}{\mu}$. The value of $\rho$ in the SGC is lower bounded by 1 while in Appendix~\ref{app:SGCWGC} we show that $\alpha$ in the WGC is lower-bounded by $\frac{\mu}{L}$.

A classical setting where we would expect these interpolation properties to hold is binary classification with a linear classifier when the data is linearly separable. In this setting the gradient of each example can be made to converge to zero due to the linear separability (under standard loss functions).
In Appendix~\ref{app:SGCWGC}, we give a generalization of an existing result~\cite[Lemma~1]{vaswani2019fast} showing that the SGC holds for a class of functions including the squared hinge loss and logistic regression in the linearly-separable setting.

In the most-general form of our results, we consider relaxations of the SGC and the WGC that do not require the data to be fit exactly \cite{poljak1973pseudogradient,cevher2019linear}. 
\begin{assumption}\label{SGC_noise}
There exists constants $\rho \geq 0$ and $\sigma \geq 0$ such that the following holds: $\forall x \in \dom(f),$
\begin{align}
    \frac{1}{n}\sum_{i=1}^n \lVert \nabla f(x;i)\rVert^2 \leq \rho \lVert \nabla f(x) \rVert^2 + \sigma^2.
\end{align}
\end{assumption}
\begin{assumption}\label{WGC_noise}
There exists constants $\alpha \geq 0$ and $\sigma \geq 0$ such that the following holds: $\forall x \in \dom(f),$
\begin{align}
    \frac{1}{n}\sum_{i=1}^n \lVert \nabla f(x;i)\rVert^2 \leq 2 \alpha L(f(x)-f(x^*)) + \sigma^2.
\end{align}
\end{assumption}
These assumptions reduce to the SGC and WGC when $\sigma=0$, and we note that the relaxed WGC is related to the expected smoothness condition of Gower et al.~\cite{gower2019sgd}. We also note that Assumption \ref{SGC_noise} reduces to the bounded variance assumption $\frac{1}{n}\sum_{i=1}^n\lVert \nabla f(x;i) - \nabla f(x) \rVert^2 \leq \sigma^2$ when $\rho = 1$, which is commonly used in the analysis of SGD \cite{bottou2018optimization}. 

\section{Contributions} \label{sec:contributions}
\begin{table}[t!]
\centering
\begin{threeparttable} 
\caption{Number of gradient evaluations required by each algorithm to obtain an $\epsilon$-accurate solution, which is defined as $\lVert x - x^*\rVert^2 \leq \epsilon$ and $f(x)-f(x^*) \leq \epsilon$ for $\mu$-strongly convex and $\mu$-PL objectives respectively. The contributions of this work are the rates highlighted in \blu{blue}.\tnote{(1)}\tnote{(2)}\tnote{(3)}}  
  \label{table:summ1}
\begin{tabularx}{1.0\textwidth}{ccccc}
\hline
                              & Interp                                  & WGC                                                  & SGC  
                              & Interp+Invex\\ \hline
SGD/SC & -               & $\alpha \frac{L}{\mu} $                            \cite{vaswani2019fast} &   $\rho \frac{L}{\mu}$   \cite{schmidt2013fast}   & $\frac{L_{\max}} {\mu}$\cite{needell2014stochastic}  \\
RR/SC  & - & - & - & $\frac{L_{\max}}{\mu}n$ \cite{mishchenko2020random}\\ \hline
SGD/PL           & $\frac{L^2_{\max}}{\mu^2}$ \cite{bassily2018exponential}  & \blu{$\alpha \frac{L^2}{\mu^2} $}                         & $\rho \frac{L}{\mu}$     \cite{vaswani2019fast} & $\frac{L^2_{\max}}{\mu^2}$ \cite{bassily2018exponential}  \\
IG/PL             & $\frac{L^2_{\max}}{\mu^2}n$ \cite{nguyen2021unified} & \blu{$\frac{L_{\max}}{\mu} n \sqrt{\frac{\alpha L}{\mu}}$} &             \blu{$\frac{L_{\max}}{\mu} n \sqrt{\rho}$}      & \blu{$(\frac{L_{\max}}{\mu})^{3/2} n $}        \\ 
RR/PL           &     $\frac{L_{\max}^2}{\mu^2}n$ \cite{nguyen2021unified}   &  \blu{$ \frac{L_{\max}}{\mu}\sqrt{n}(\sqrt{n} \vee \sqrt{\frac{\alpha L}{\mu}} )$}        &  \blu{$\frac{L_{\max}}{\mu} \sqrt{n} (  \sqrt{n}\vee \sqrt{\rho})$}  & \blu{$\frac{L_{\max}}{\mu}\sqrt{n} ( \sqrt{n}\vee \sqrt{\frac{L_{\max}}{\mu}})$}
\\\hline
\end{tabularx}
\begin{tablenotes}\footnotesize
\item[(1)] We ignore numerical constants and logarithmic factors. Note that $\mu$-SC stands for $\mu$-strongly convex. Interp, SGC, and WGC refer to Assumption \ref{interp}, \ref{SGC}, and \ref{WGC} (respectively). 
Note that $L \leq L_{\max}$, for the WGC constant $\alpha$ we have $\frac{\mu}{L} \leq \alpha \leq \rho$~\cite[Appendix~\ref{app:SGCWGC}]{mishkin2020interpolation}, while for the SGC constant $\rho$ we have $\max\{1,\alpha\} \leq \rho \leq \frac{\alpha L}{\mu}$~\cite{vaswani2019fast}, and if all functions are invex we also have $\alpha \leq \frac{L_{\max}}{L}$ and $\rho \leq \frac{L_{\max}}{\mu}$~\cite{mishkin2020interpolation}. 
 \item[(2)] The ``Interp+Invex'' results assume each function is invex, and the strongly-convex results~\cite{needell2014stochastic,mishchenko2020random} additionally assume each function is convex.
 \item[(3)] The symbol ``$\vee$" refers to taking the maximum of two values. 
\end{tablenotes}
\end{threeparttable}
\end{table}

 Table \ref{table:summ1} summarizes known and new convergence results for the convergence of SGD on strongly-convex and PL functions under the interpolation assumptions (Assumptions ~\ref{interp}, \ref{SGC}, and~\ref{WGC}). In this table we see that existing RR rates for strongly-convex functions are slower than for SGD. In contrast, the RR rates shown in this work can be faster than the SGD rates for $\mu$-PL objectives. Our main contributions are summarized as follows:


 \paragraph{RR under WGC:} for $\mu$-PL functions satisfying the WGC, we derive the sample complexity of RR to be $\tilde{\mathcal{O}}(\max \{ \frac{L_{\max}}{\mu}n, \frac{L_{\max}}{\mu} \sqrt{n} \sqrt{\frac{\alpha L}{\mu}})$. In comparison, the sample complexity of SGD in this case is $\tilde{\mathcal{O}}(\alpha \frac{L^2}{\mu^2})$ (we show this in Appendix~\ref{sec:sgd_proofs}). Hence, RR outperforms SGD when $\bm{\alpha \frac{L}{\mu}} \bm{>} \bm{n}$ and $L_{\max} \sim L$ without requiring a large number of epochs. Thus, for $\mu$-PL objectives satisfying the WGC, RR converges faster than SGD for sufficiently ill-conditioned problems when the Lipschitz constants are similar.

\vspace{2pt}
\paragraph{RR under SGC:}  under the SGC we give a sample complexity for RR of\\$\tilde{\mathcal{O}}(\max \{ \frac{L_{\max}}{\mu}n,\frac{L_{\max}}{\mu} \sqrt{n} \sqrt{\rho}\})$, which is better than the~$\tilde{\mathcal{O}}(\rho \frac{L}{\mu})$ of SGD provided $\bm{\rho} \bm{>} \bm{n}$ and $L_{\max} \sim L$. The situation $\bm{\rho} \bm{>} \bm{n}$ happens when there is a large amount of disagreements in the gradients. Note that the rates of SGC can be faster than the WGC when $\rho \leq \alpha \frac{L}{\mu}$. This can be satisfied for example when $\alpha \sim 1$ and $L_{\max} \sim L$.  

\vspace{2pt}
\paragraph{IG under WGC/SGC:} IG can also outperform SGD for $\mu$-PL objectives when $\bm{\rho} \bm{>} \bm{n^2}$ and $L_{\max} \sim L$ under the SGC, or when $\bm{\alpha \frac{L}{\mu}} \bm{>} \bm{n^2}$ and $L_{\max} \sim L$ under the WGC.  The rates of RR for $\mu$-PL objectives can be better than IG by a factor of $\sqrt{n}$, and cannot be worse because $\rho \geq 1$ and $\alpha \geq \frac{\mu}{L}$ (see Proposition \ref{prop:alpha_wgc} in Appendix). 

\vspace{2pt}
\paragraph{Experiments:} we conduct experiments on objectives satisfying our assumptions. We experimentally see that a condition like $\bm{\rho} \bm{>} \bm{n}$ is important for RR to achieve a faster rate than SGD. For over-parameterized problems we demonstrate that in practice RR can converge faster than SGD even in the early training phase, but that RR has less of an advantage as $n$ grows.

\paragraph{Comparisons with Existing Rates for $\mu$-PL Functions:}
Appendix~\ref{sec:literature_app} discusses how the proof of
Nguyen et al.~\cite[Theorem~1]{nguyen2021unified} can used to obtain convergence rates of RR on over-parameterized $\mu$-PL objectives.
Under interpolation (Assumption~\ref{interp}), the analysis of Nguyen et al. implies a rate for RR of  $\mathcal{\Tilde{O}}((\frac{L_{\max}}{\mu})^2 n)$, which is always worse than the SGD rate of $\mathcal{\Tilde{O}}((\frac{L_{\max}}{\mu})^2)$ and our RR rate under this assumption of $\mathcal{\Tilde{O}}( \max\{ \frac{L_{\max}}{\mu} n, (\frac{L_{\max}}{\mu})^{3/2} \sqrt{n} \})$ (although this rate requires assuming each function is invex).
On the other hand, under the SGC (Assumption~\ref{SGC}) the analysis of Nguyen et al. implies a rate of $\mathcal{\Tilde{O}}(\max \{ \frac{L_{\max}}{\mu}n,\frac{L_{\max}}{\mu^2}\sqrt{n}\sqrt{n+\rho-1} \})$, but this is worse than our SGC result of~$\mathcal{\Tilde{O}}(\max \{ \frac{L_{\max}}{\mu} n, \frac{L_{\max}}{\mu} \sqrt{n}\sqrt{\rho} \})$ and only faster than SGD under a more-narrow range of problem constant settings.

\section{Convergence Results}
 We present the convergence results of RR and IG for $\mu$-PL objectives under the SGC and the WGC. Below, we use a constant step size $\eta$. The proofs of Theorem \ref{thm:rr} and \ref{thm:ig} are given in Appendix \ref{sec:main_proof}. 

\begin{theorem}[RR + $\mu$-PL]\label{thm:rr}
    Suppose f satisfies Assumption \ref{smooth} and \ref{def_pl}, 
    then RR with a learning rate $\eta$ satisfying 
    \begin{align}
        \eta \leq \min \{ \frac{1}{2 nL_{\max}}, \frac{1}{2 \sqrt{2} L_{\max} \sqrt{n \rho}}\} \quad \text{under Assumption \ref{SGC_noise}},\nn
    \end{align}
    or 
    \begin{align}
        \eta \leq \min \{ \frac{1}{2 nL_{\max}}, \frac{1}{2 \sqrt{2} L_{\max}\sqrt{n} \sqrt{\frac{\alpha L}{\mu}}}\} \quad \text{under Assumption \ref{WGC_noise}},
    \end{align}
    achieves the following rate for $\mu$-PL objectives:
    \begin{align}
        \mathbb{E}[f(x_n^{T}) - f(x^*)] \leq (1-\frac{1}{4}n\mu\eta)^T (f(x^0)-f(x^*)) + \frac{4L_{\max}^2 \eta^2 n \sigma^2}{\mu}. \label{eq:thm_rr_main}
    \end{align}
\end{theorem}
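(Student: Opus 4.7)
The plan is to derive a per-epoch contractive inequality of the form $f(x_n^t) - f^* \leq (1 - \tfrac{n\mu\eta}{4})(f(x_0^t)-f^*) + c\,L_{\max}^2 \eta^3 n^2 \sigma^2$ and then iterate over $T$ epochs; summing the geometric series with ratio $1-\tfrac{n\mu\eta}{4}$ converts the per-epoch noise into the $\tfrac{4L_{\max}^2\eta^2 n\sigma^2}{\mu}$ floor stated in \eqref{eq:thm_rr_main}. The structural fact that unlocks the whole argument is that because $\pi^t$ visits every index exactly once,
\begin{equation*}
\sum_{j=0}^{n-1}\nabla f(x_0^t;\pi^t_{j+1}) \;=\; n\,\nabla f(x_0^t),
\end{equation*}
so the cumulative epoch displacement $x_n^t - x_0^t = -\eta\sum_j \nabla f(x_j^t;\pi^t_{j+1})$ equals $-n\eta\nabla f(x_0^t)$ plus a drift $-\eta\sum_j[\nabla f(x_j^t;\pi^t_{j+1}) - \nabla f(x_0^t;\pi^t_{j+1})]$ that is controllable via $L_{\max}$-smoothness of each $f(\cdot;i)$.

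First I would apply $L$-smoothness of $f$ to the full-epoch move, expand the inner product using the permutation identity, and apply Young's inequality with parameter $n/2$ to the drift cross term, obtaining
\begin{equation*}
f(x_n^t) - f(x_0^t) \;\leq\; -\tfrac{3n\eta}{4}\norm{\nabla f(x_0^t)}^2 + \eta L_{\max}^2 S + \tfrac{L\eta^2 n}{2}G,
\end{equation*}
where $S := \sum_j\norm{x_j^t-x_0^t}^2$ and $G := \sum_j\norm{\nabla f(x_j^t;\pi^t_{j+1})}^2$. Cauchy--Schwarz gives $S \leq \tfrac{\eta^2 n^2}{2}G$, and $L_{\max}$-smoothness of each $f(\cdot;i)$ yields $\norm{\nabla f(x_j^t;\pi^t_{j+1})}^2 \leq 2\norm{\nabla f(x_0^t;\pi^t_{j+1})}^2 + 2L_{\max}^2\norm{x_j^t-x_0^t}^2$, so summing gives $G \leq 2\sum_{i=1}^n\norm{\nabla f(x_0^t;i)}^2 + 2L_{\max}^2 S$. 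Invoking Assumption~\ref{SGC_noise} or Assumption~\ref{WGC_noise} then bounds $\sum_i\norm{\nabla f(x_0^t;i)}^2$ by $n\rho\norm{\nabla f(x_0^t)}^2 + n\sigma^2$ or by $2\alpha L n (f(x_0^t)-f^*) + n\sigma^2$.

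The first step-size restriction $\eta \leq \tfrac{1}{2nL_{\max}}$ is exactly what forces $2L_{\max}^2 S \leq \tfrac{1}{2}G$, which closes the self-referential recursion and lets me solve explicitly for $G$ and $S$ in terms of $\norm{\nabla f(x_0^t)}^2$ (or $f(x_0^t)-f^*$) plus $n\sigma^2$. Substituting back into the descent inequality leaves a residual term in $\norm{\nabla f(x_0^t)}^2$ (SGC) or $(f(x_0^t)-f^*)$ (WGC) carrying the extra factor $L_{\max}^2\eta^2 n\rho$ or $\alpha L L_{\max}\eta^2 n/\mu$; the second step-size restriction $\eta \leq \tfrac{1}{2\sqrt{2}L_{\max}\sqrt{n\rho}}$ (resp.\ the $\sqrt{\alpha L/\mu}$ version) is tailored to absorb this residual into half of the $\tfrac{3n\eta}{4}\norm{\nabla f(x_0^t)}^2$ descent. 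The remaining fraction is converted into $\tfrac{n\eta\mu}{2}(f(x_0^t)-f^*)$ by the PL inequality (Assumption~\ref{def_pl}); for the WGC case, PL is invoked at the same step to additionally absorb the WGC's direct $(f(x_0^t)-f^*)$ contribution. This produces the one-epoch contraction; iterating over $T$ epochs and summing the geometric series on the $\eta^3 n^2 L_{\max}^2\sigma^2$ noise increment yields \eqref{eq:thm_rr_main}.

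The main obstacle is tight constant bookkeeping. Four competing terms — the leading descent, Young's cross term, the drift-induced term (which carries polynomials in $n$, $\eta$, $L_{\max}$, and $\rho$ or $\alpha L/\mu$), and the noise contribution — must be balanced so that \emph{both} stated step-size restrictions are simultaneously necessary to close the recursion on $G$ and to absorb the residual gradient term, while producing exactly the contraction rate $\tfrac{n\mu\eta}{4}$ and noise constant $\tfrac{4L_{\max}^2\eta^2 n\sigma^2}{\mu}$ quoted in the theorem. The naive Cauchy--Schwarz drift bound $S \leq \tfrac{\eta^2 n^2}{2}G$ appears slack by factors of $\sqrt{n}$ or $\sqrt{\rho}$ in places, so the actual proof likely needs a more careful accounting of the drift — for example, isolating the $-n\eta\nabla f(x_0^t)$ part of $x_n^t-x_0^t$ inside the smoothness quadratic before bounding the remainder, thereby preventing $\rho$ from multiplying the $L\eta^2 n^2$ coefficient. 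Once the per-epoch inequality has the right constants, the $T$-step iteration and geometric-series argument are routine.
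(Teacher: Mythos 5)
There is a genuine gap, and you half-sense it yourself in your closing paragraph: your bound on the deviation sum $S=\sum_{j}\norm{x_j^t-x_0^t}^2$ never uses the randomness of the permutation, and as a consequence your argument can only deliver the IG rate, not the RR rate. Concretely, chaining $S\le \tfrac{\eta^2 n^2}{2}G$ with $G\le 2\sum_i\norm{\nabla f(x_0^t;i)}^2+2L_{\max}^2 S$ and the (relaxed) SGC yields $S\lesssim \eta^2 n^3\rho\norm{\nabla f(x_0^t)}^2+\eta^2 n^3\sigma^2$, which is exactly the paper's IG bound (Lemma~\ref{lem_ig}). Feeding that into the descent inequality forces the step size $\eta\lesssim \frac{1}{L_{\max}n\sqrt{\rho}}$ and produces a noise floor of order $\frac{L_{\max}^2\eta^2 n^2\sigma^2}{\mu}$ --- both a factor related to $\sqrt{n}$ (resp.\ $n$) worse than what Theorem~\ref{thm:rr} claims. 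The permutation identity you highlight, $\sum_{j=0}^{n-1}\nabla f(x_0^t;\pi^t_{j+1})=n\nabla f(x_0^t)$, is true but holds equally for IG and only for the \emph{full} sum over the epoch; the quantity you actually need to control is $\sum_{i}\norm{x_i^t-x_0^t}^2$, which involves \emph{partial} prefix sums $\sum_{j<i}\nabla f(x_0^t;\pi^t_{j+1})$ for $i<n$, and deterministic Cauchy--Schwarz on those is where the slack enters.

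The missing ingredient is the sampling-without-replacement variance lemma (the paper's Lemma~\ref{lem_exp}, from Mishchenko et al.): the average of $i$ individual gradients drawn without replacement satisfies
\begin{align}
\mathbb{E}\Bigl[\bigl\lVert \tfrac{1}{i}\textstyle\sum_{j=0}^{i-1}\nabla f(x_0^t;\pi^t_{j+1})-\nabla f(x_0^t)\bigr\rVert^2\,\Big|\,\sigma^t\Bigr]\le \frac{n-i}{i(n-1)}\cdot\frac{1}{n}\sum_{j=0}^{n-1}\lVert\nabla f(x_0^t;j+1)\rVert^2,
\end{align}
and the factor $\frac{n-i}{i(n-1)}$ is what turns the deviation bound into $\mathbb{E}[S]\lesssim \eta^2 n^2(\rho+n)\mathbb{E}\norm{\nabla f(x_0^t)}^2+\eta^2 n^2\sigma^2$, i.e.\ $(\rho+n)$ in place of $n\rho$ and $n^2\sigma^2$ in place of $n^3\sigma^2$. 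That is the sole point at which RR separates from IG, and it is what justifies the larger step size $\eta\le\frac{1}{2\sqrt{2}L_{\max}\sqrt{n\rho}}$ and the stated noise constant $\frac{4L_{\max}^2\eta^2 n\sigma^2}{\mu}$. Your suggested remedy (isolating $-n\eta\nabla f(x_0^t)$ inside the smoothness quadratic for the full-epoch move) does not address this, because the problematic term is the sum of intermediate deviations inside the epoch, not the endpoint displacement. To repair the proof you need the three-way decomposition $\nabla f(x_j^t;\pi^t_{j+1})=[\nabla f(x_j^t;\pi^t_{j+1})-\nabla f(x_0^t;\pi^t_{j+1})]+[\nabla f(x_0^t;\pi^t_{j+1})-\nabla f(x_0^t)]+\nabla f(x_0^t)$ and the conditional-expectation bound above on the middle term; the rest of your outline (closing the self-referential recursion with $\eta\le\frac{1}{2nL_{\max}}$, absorbing the residual with the second step-size condition, applying PL, and summing the geometric series) then goes through as in the paper.
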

\begin{remark} \label{remark_rr}
    Set $\sigma = 0$ in \eqref{eq:thm_rr_main} and using Lemma~\ref{lem_recur} in Appendix \ref{sec:usef_lemmas} to solve the recursion, the sample complexity under the SGC is $\mathcal{\Tilde{O}}(\max \{ \frac{L_{\max}}{\mu}n, \frac{L_{\max}}{\mu} \sqrt{n}\sqrt{\rho} \})$, and the sample complexity under the WGC is $\mathcal{\Tilde{O}}(\max \{ \frac{L_{\max}}{\mu}n, \frac{L_{\max}}{\mu} \sqrt{n}\sqrt{\frac{\alpha L}{\mu}} \})$. These rates are faster than  SGD under the corresponding assumption for $\mu$-PL objectives when either $n < \rho$ or $n < \alpha \frac{L}{\mu}$ holds and $L_{\max} \sim L$.     In cases where $L_{\max}$ is not similar to $L$, RR can still outperform SGD when $n < \frac{\rho L^2}{L_{\max}^2}$ is satisfied under the SGC or $n < \frac{\alpha L^3}{\mu L_{\max}^2}$ is satisfied under the WGC.  
\end{remark}
\begin{theorem}[IG + $\mu$-PL]\label{thm:ig}
    Suppose f satisfies Assumption \ref{smooth} and \ref{def_pl}, 
    then IG with a learning rate $\eta$ satisfying 
    \begin{align}
        \eta \leq \min\{ \frac{1}{\sqrt{2}nL_{\max}}, \frac{1}{2 L_{\max} n \sqrt{\rho}} \} \quad \text{under Assumption \ref{SGC_noise}},\nn
    \end{align}
    or 
    \begin{align}
        \eta \leq \min\{ \frac{1}{\sqrt{2}nL_{\max}},\frac{1}{2 L_{\max} n \sqrt{\frac{\alpha L}{\mu}}} \} \quad \text{under Assumption \ref{WGC_noise}},
    \end{align}
    achieves the following rate for $\mu$-PL objectives:
    \begin{align}
        f(x_n^{T}) - f(x^*) \leq (1-\frac{1}{2}n\mu\eta)^T (f(x^0)-f(x^*)) + \frac{2L_{\max}^2 \eta^2 n^2 \sigma^2}{\mu}.
    \end{align}
\end{theorem}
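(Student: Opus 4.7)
The plan is to mirror the single-epoch analysis behind Theorem~\ref{thm:rr} but without any randomization, since the permutation in IG is fixed. Fix an epoch $t$ and write $x_0^t = x_n^{t-1}$. By $L$-smoothness of $f$,
\begin{align*}
f(x_n^t) - f(x_0^t) \leq \nabla f(x_0^t)^\top(x_n^t - x_0^t) + \tfrac{L}{2}\|x_n^t - x_0^t\|^2.
\end{align*}
Since IG visits every sample exactly once per epoch, $x_n^t - x_0^t = -\eta\sum_{j=0}^{n-1}\nabla f(x_j^t;\pi^t_{j+1})$ and $\sum_{j=0}^{n-1}\nabla f(x_0^t;\pi^t_{j+1}) = n\nabla f(x_0^t)$. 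I would therefore decompose the linear term as
\begin{align*}
\nabla f(x_0^t)^\top(x_n^t - x_0^t) = -n\eta\|\nabla f(x_0^t)\|^2 - \eta\sum_{j=0}^{n-1}\nabla f(x_0^t)^\top\bigl[\nabla f(x_j^t;\pi^t_{j+1}) - \nabla f(x_0^t;\pi^t_{j+1})\bigr],
\end{align*}
isolating a descent term of order $-n\eta\|\nabla f(x_0^t)\|^2$ plus drift contributions governed by $\|x_j^t - x_0^t\|$.

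Next I would produce a deterministic drift bound. From $x_j^t - x_0^t = -\eta\sum_{k=0}^{j-1}\nabla f(x_k^t;\pi^t_{k+1})$ and Cauchy-Schwarz, $\|x_j^t - x_0^t\|^2 \leq n\eta^2\sum_{k=0}^{j-1}\|\nabla f(x_k^t;\pi^t_{k+1})\|^2$. Using $L_{\max}$-smoothness to replace each inner gradient by $\nabla f(x_0^t;\pi^t_{k+1})$ plus an $L_{\max}\|x_k^t - x_0^t\|$ error, a short induction combined with the step-size constraint $\eta \leq 1/(\sqrt{2}\,nL_{\max})$ absorbs the error terms and yields
\begin{align*}
\sum_{j=0}^{n-1}\|x_j^t - x_0^t\|^2 \;\lesssim\; \eta^2 n^2 \sum_{i=1}^n\|\nabla f(x_0^t;i)\|^2
\end{align*}
up to mild constants. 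The extra factor of $n^2$ here (versus $n$ for RR) is precisely the price of having no random-permutation averaging.

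Plugging these into the smoothness expansion, applying Cauchy-Schwarz/Young to the cross terms, and bounding $\tfrac{1}{n}\sum_i\|\nabla f(x_0^t;i)\|^2$ via either Assumption~\ref{SGC_noise} or Assumption~\ref{WGC_noise} gives a per-epoch descent inequality of the form
\begin{align*}
f(x_n^t) - f(x_0^t) \leq -\tfrac{n\eta}{2}\|\nabla f(x_0^t)\|^2 + C\, L_{\max}^2\eta^2 n^2\sigma^2,
\end{align*}
provided the second step-size bound ($\eta \leq 1/(2L_{\max}n\sqrt{\rho})$ under SGC or $\eta \leq 1/(2L_{\max}n\sqrt{\alpha L/\mu})$ under WGC) is active and absorbs at most half of the descent term. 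Applying $\mu$-PL converts this into $f(x_n^t) - f^* \leq (1 - \tfrac{n\mu\eta}{2})(f(x_0^t) - f^*) + 2L_{\max}^2\eta^2 n^2\sigma^2/\mu$, and iterating the recursion $T$ times via Lemma~\ref{lem_recur} yields the claimed bound.

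The main obstacle is the drift estimate: in the RR proof one can average over the $n!$ permutations to save a $\sqrt{n}$ factor, but for IG one is stuck with the worst-case ordering. This forces the tighter step size $\mathcal{O}(1/(nL_{\max}\sqrt{\rho}))$ instead of $\mathcal{O}(1/(L_{\max}\sqrt{n\rho}))$ and inflates the asymptotic noise floor from $n\sigma^2$ to $n^2\sigma^2$. The constants in the Young's-inequality splits and in the drift induction must be tuned carefully so that the contraction factor comes out as $(1-\tfrac{n\mu\eta}{2})$—matching the sharper IG rate claimed here—rather than the weaker $(1-\tfrac{n\mu\eta}{4})$ that would result from a naive splitting.
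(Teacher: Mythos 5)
Your proposal follows essentially the same route as the paper: a smoothness-based per-epoch descent inequality (the paper's Lemma~\ref{lem_descent}), a deterministic drift bound of the form $\sum_{j}\|x_j^t-x_0^t\|^2 \lesssim \eta^2 n^2\sum_i\|\nabla f(x_0^t;i)\|^2$ absorbed via $\eta\le 1/(\sqrt{2}nL_{\max})$ (the paper's Lemma~\ref{lem_ig}), then the growth condition, the PL inequality, and unrolling the recursion. Two minor slips that do not affect the plan: your per-epoch noise term should read $L_{\max}^2\eta^3 n^3\sigma^2$ (not $\eta^2 n^2$) in order for the geometric sum to yield the stated floor $2L_{\max}^2\eta^2 n^2\sigma^2/\mu$, and the ``naive'' halving of the gradient term still gives contraction $(1-\tfrac{n\mu\eta}{2})$ because PL contributes the factor $2\mu$, so no special tuning is needed there.
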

\begin{remark}
    Similar to Remark \ref{remark_rr}, we can follow the same approach to obtain the sample complexity for IG to be $\mathcal{\Tilde{O}}(\max \{\frac{L_{\max}}{\mu}n, \frac{L_{\max}}{\mu} n \sqrt{\rho})\})=\mathcal{\Tilde{O}}(\frac{L_{\max}}{\mu} n \sqrt{\rho})$ under the SGC and $\mathcal{\Tilde{O}}(\max \{ \frac{L_{\max}}{\mu}n,\frac{L_{\max}}{\mu} n \sqrt{\frac{\alpha L}{\mu}} \})=\mathcal{\Tilde{O}}(\frac{L_{\max}}{\mu} n \sqrt{\frac{\alpha L}{\mu}})$ under the WGC. For $\mu$-PL objectives, these rates are worse than RR as the parameter $\rho$ in the SGC satisfies $\rho \geq 1$, and the parameter $\alpha $ in the WGC satisfies $\alpha \geq \frac{\mu}{L}$.   
\end{remark}
\subsection{Proof Sketch} In this section, we provide a proof sketch of Theorems \ref{thm:rr} and \ref{thm:ig} under the SGC. A similar approach is taken for the WGC. To start with, we first express the descent on the objective $f$ in terms of epochs. For a step size $\eta \leq \frac{1}{nL}$ \cite{nguyen2021unified}, we have the following
\begin{align}
    f(x^{t+1}_0) \leq f(x^{t}_0) - \frac{n\eta}{2}\lVert \nabla f(x^t_0) \rVert^2 + \frac{L_{\max}^2 \eta}{2} \sum_{i=0}^n \lVert x_i^t - x_0^t \rVert^2. \label{eq:main_prog}
\end{align}
The key is to upper bound the sum of deviations of each step  from its starting point within an epoch, i.e. $\sum_{i=0}^n \lVert x_i^t - x_0^t \rVert^2$. We want this term to be small to get a fast rate. To do so, we obtain the following results (see Lemma \ref{lem_ig} and Lemma \ref{lem_rr} in Appendix \ref{sec:usef_lemmas} by setting $\sigma = 0$) for IG and RR under the SGC.  
    \begin{align}
        \text{IG: } \qquad \sum_{i=0}^{n-1} \lVert x_i^t - x_0^t \rVert^2 &\leq 2 \eta^2 n^2 (n \rho) \lVert \nabla f(x_0^t) \rVert^2, \nn \\
        \text{RR: } \qquad \mathbb{E}[\sum_{i=0}^{n-1}\lVert x_i^t - x_0^t\rVert^2] &\leq 2 \eta^2 n^2 (\rho + n) \mathbb{E}[\lVert \nabla f(x_0^t) \rVert^2]. \nn
    \end{align}
Neglecting the effects of taking expectations, the difference in the deviation bound between RR and IG is reflected in the factors $\rho+n$ and $n \rho$. After substituting these results into \eqref{eq:main_prog}, we obtain
\begin{align}
    \text{IG: } \quad f(x_0^{t+1}) &\stackrel{(a)}{\leq} f(x_0^t) - \frac{n \eta}{2} (1-2L_{\max}^2\eta^2 n^2 \rho) \lVert \nabla f(x_0^t) \rVert^2 \label{eq:IG_SGC_f},\\
    \text{RR: } \quad \mathbb{E}[f(x_0^{t+1})] &\stackrel{(b)}{\leq} \mathbb{E}[f(x_0^t)]-\frac{n\eta}{4} (1- 4 L_{\max}^2 \eta^2 n \rho)\mathbb{E}[\lVert\nabla f(x_0^t) \rVert^2],\label{eq:RR_SGC_f}
\end{align}
 where both (a) and (b) require $\eta \sim \mathcal{\Tilde{O}}(\frac{1}{nL_{\max}})$. Note that for IG to make progress in decreasing $f$, we further require $\eta \sim \mathcal{\Tilde{O}}(\min \{ \frac{1}{nL_{\max}},  \frac{1}{L_{\max}n \sqrt{\rho}}\} = \frac{1}{L_{\max}n \sqrt{\rho}})$ as $\rho \geq 1$; whereas for RR, we require $\eta \sim \mathcal{\Tilde{O}}(\min \{ \frac{1}{nL_{\max}}, \frac{1}{L_{\max} \sqrt{n \rho}}\})$. Hence, the learning rate of RR can be larger than IG under the SGC.

\section{Experimental Results}
In this section, we aim to address the following questions with our experiments: \ding{172} How do the empirical convergence rates of RR, SGD, and IG depend on $n$ and $\rho$? \ding{173} Is a large number of epochs required for RR to outperform SGD under interpolation? Unless otherwise stated, we choose a constant learning rate in the range $[10^{-3}, 10^{-1}]$ that minimizes the train loss.

\subsection{Synthetic Experiments} \label{sec:syn}
We first conduct experiments on a binary classification synthetic dataset. Specifically, we use the code provided by Loizou et al.~\cite{loizou2021stochastic} to generate linearly-separable data with margin $\tau > 0$. For training, we use a squared hinge loss with or without $L_2$ regularization to simulate the convex and $\mu$-PL cases respectively. Note that squared hinge loss satisfies the SGC with $\rho\leq\frac{n}{\tau^2}$ (see Proposition \ref{sgc_more} in Appendix \ref{app:SGCWGC}). Thus by varying $\tau$ we can investigate the convergence of SGD, RR, and IG as a function of the SGC constant $\rho$. The results are shown in Figure \ref{fig:figure1}. In agreement with our bounds, we observe that: (i) RR outperforms SGD when $\rho \gg n$, (ii) RR outperforms IG for all values of $\rho$, and (iii) the convergence (naturally) slows down for increasing $\rho$ (which corresponds to decreasing margin $\tau$). 

\begin{figure*}[t]
    \centering
    \begin{subfigure}{0.32\textwidth}
        \includegraphics[width=\textwidth]{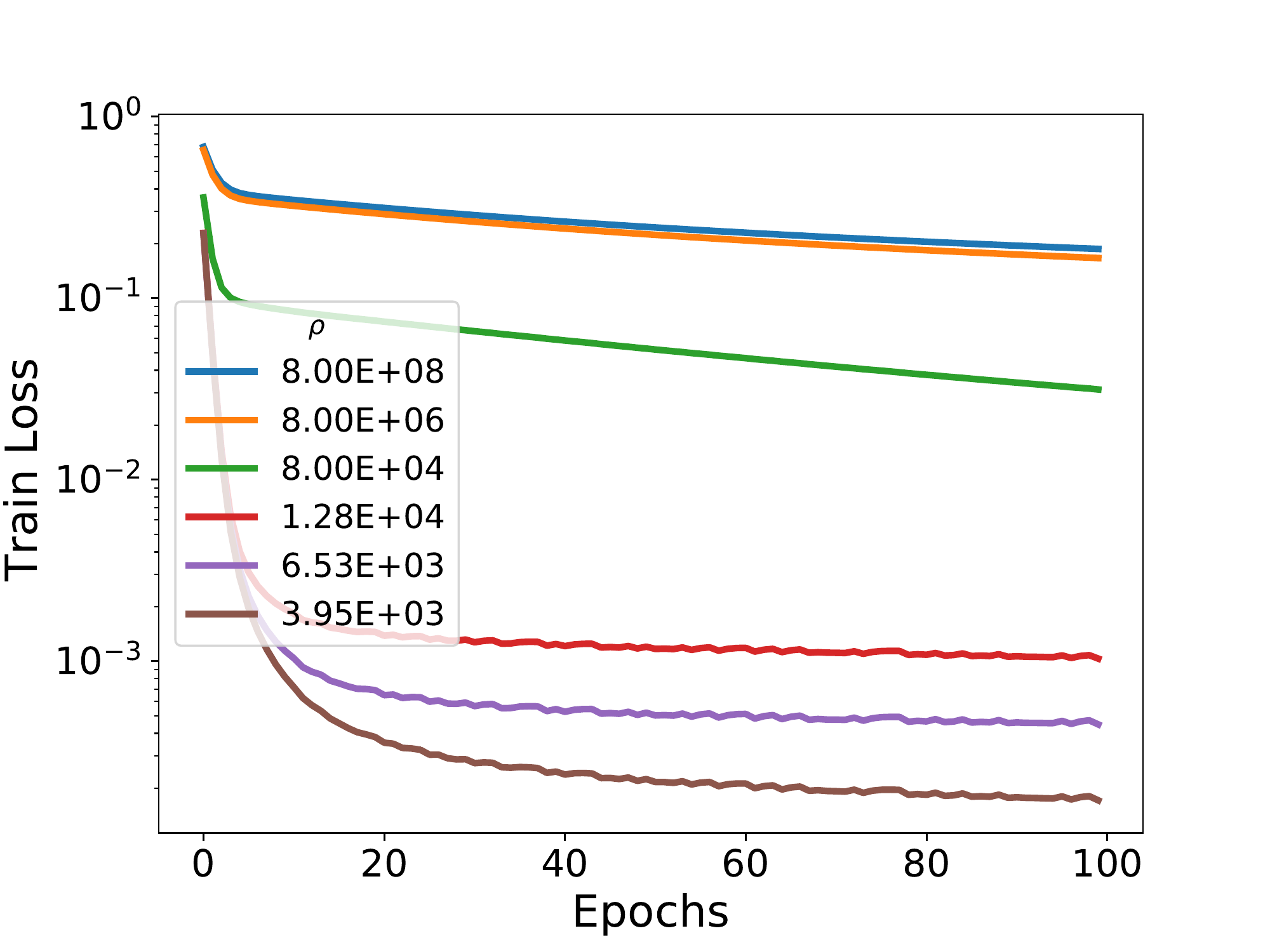}
        \caption{Train loss vs epochs}
        \label{fig:syn_a}
    \end{subfigure}
    \begin{subfigure}{0.32\textwidth}
        \includegraphics[width=\textwidth]{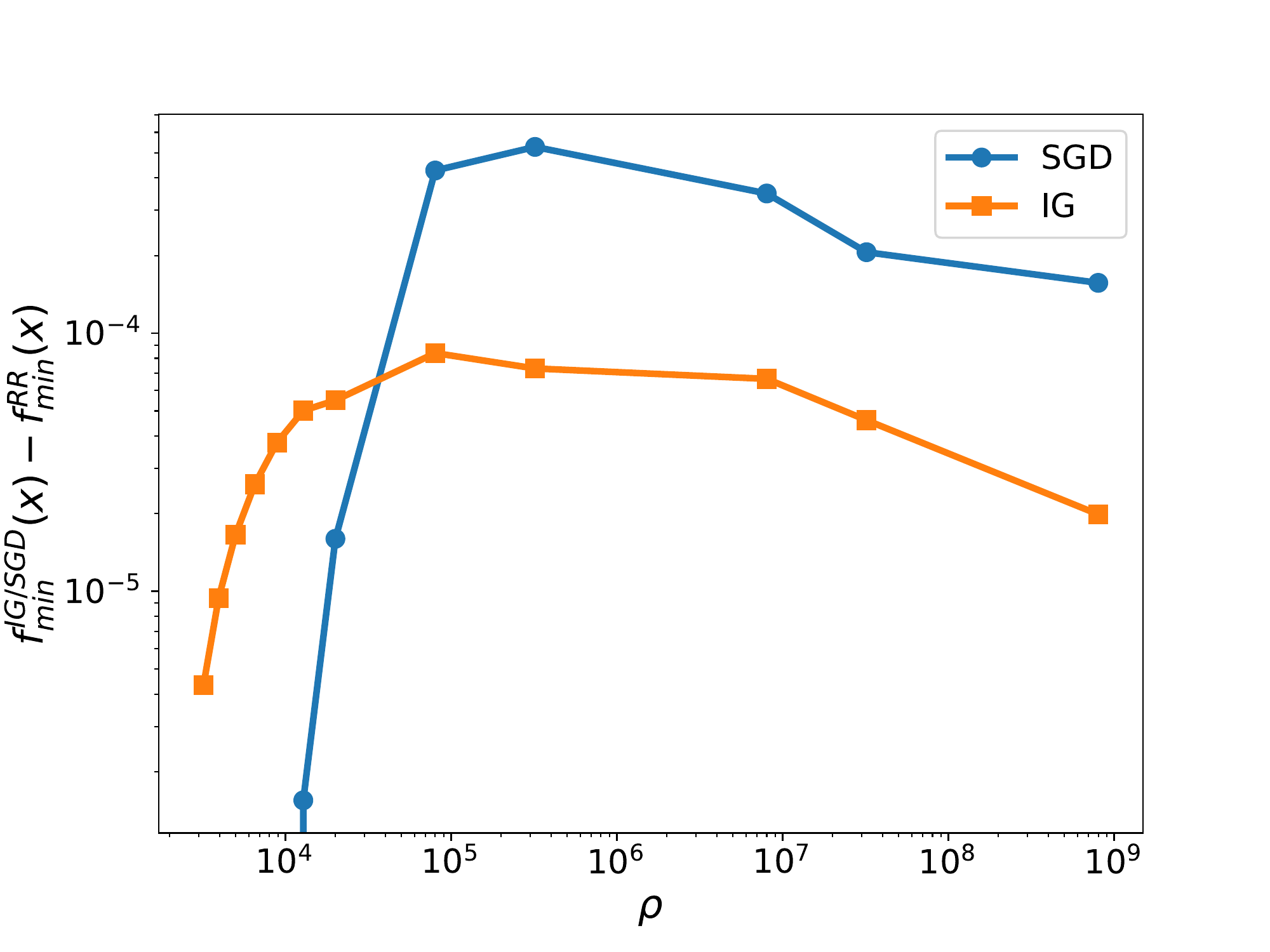}
        \caption{Squared hinge loss}
        \label{fig:syn_b}
    \end{subfigure}
    \begin{subfigure}{0.32\textwidth}
        \includegraphics[width=\textwidth]{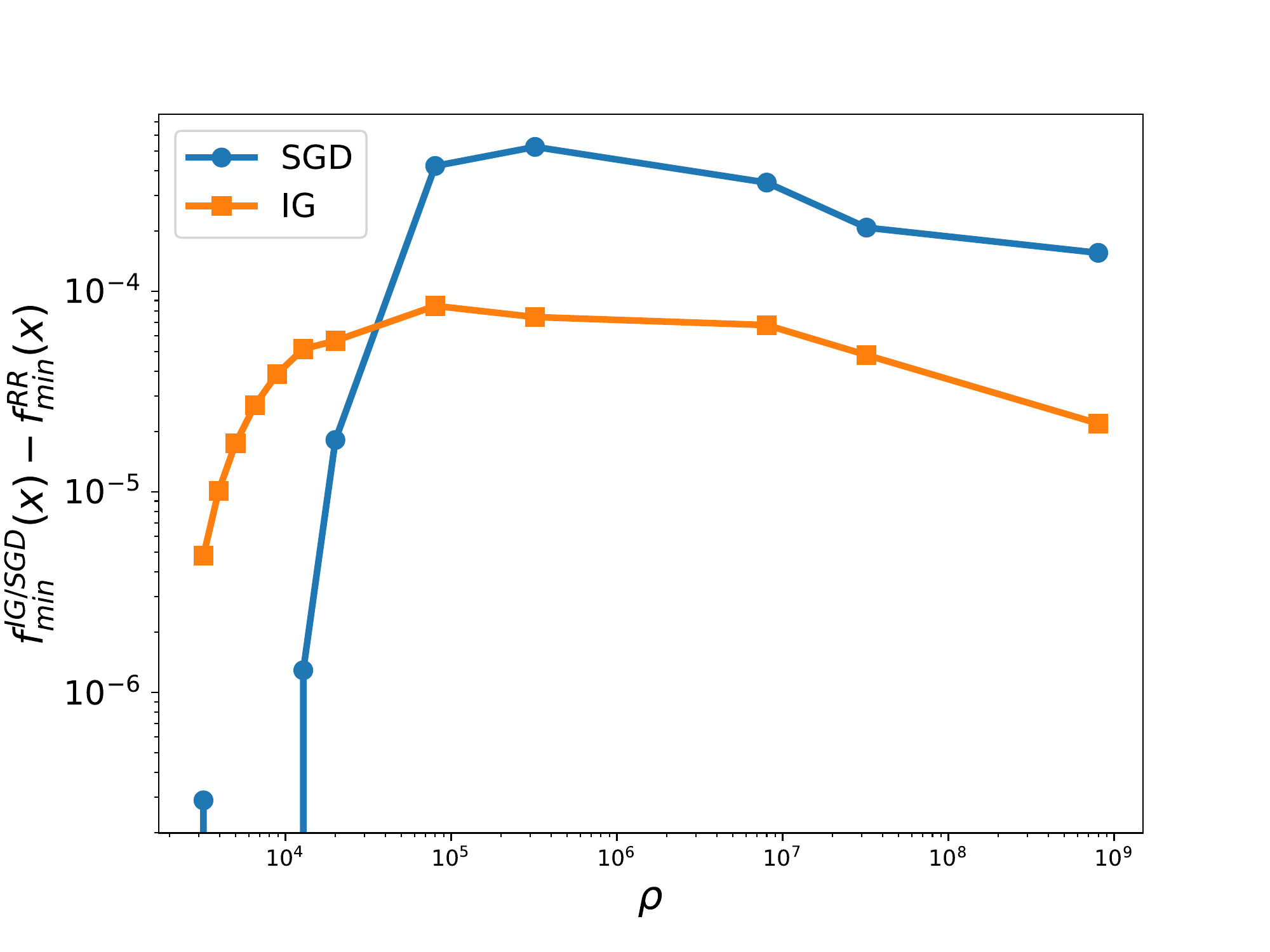}
        \caption{Squared hinge $L_2$ loss}
        \label{fig:syn_c}
    \end{subfigure}
    \caption{Binary classification on a linearly separable dataset with $n=800$. (a) We plot train loss of RR as a function of epochs using squared hinge loss for different values of $\rho$. We observe that the training is very slow when $\rho > \mathcal{O}(10^4)$, and speeds up when $\rho$ approaches the order of magnitude of $n$ or less. (b) We plot minimum train loss difference between RR and SGD or IG as a function of $\rho$. For a convex objective such as squared hinge loss, we observe that the performance gain of RR over SGD predominantly occurs when $\rho \gtrsim \mathcal{O}(10^4)$. For $\rho < \mathcal{O}(10^4)$, SGD achieves a lower train loss than RR. We also observe that RR consistently performs better than IG for all $\rho$ values. (c) Similar observations are made as those of (b) for a $\mu$-PL objective: squared hinge loss with $L_2$ regularization.}
    \label{fig:figure1}
\end{figure*}

\subsection{Binary classification using RBF kernels} \label{sec:rbf}
Besides understanding the influence of $\rho$, we further conduct experiments to study the convergence rates of RR in minimizing convex losses under interpolation. To this end, we use squared hinge loss, squared loss and logistic loss on the mushrooms dataset from LIBSVM using RBF kernels \cite{loizou2021stochastic}. We follow the procedure in Vaswani et al.~\cite{vaswani2019painless} for choosing the bandwidths of the RBF kernels. As shown by our bounds, a large epoch is not required for RR to outperform SGD. This is observed in Figure \ref{fig:figure3}, in which RR outperforms SGD in the first few epochs of training, and the performance gap is significantly larger in the early training stage than the later. 

\begin{figure*}[h!]
    \centering
    \begin{subfigure}{0.32\textwidth}
        \includegraphics[width=\textwidth]{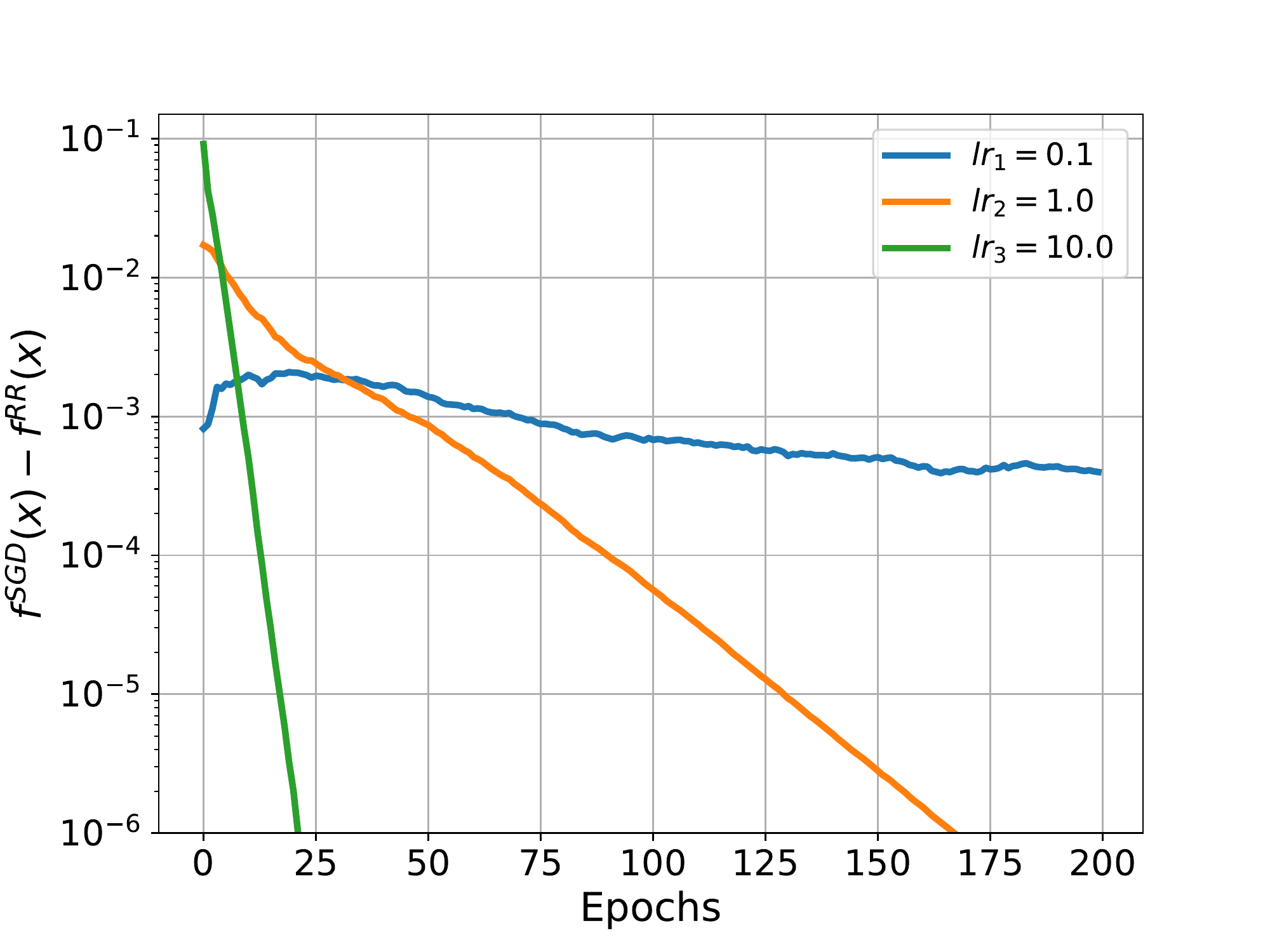}
        \caption{Squared hinge loss}
        \label{fig:rbf_a}
    \end{subfigure}
    \hfill
    \begin{subfigure}{0.32\textwidth}
        \includegraphics[width=\textwidth]{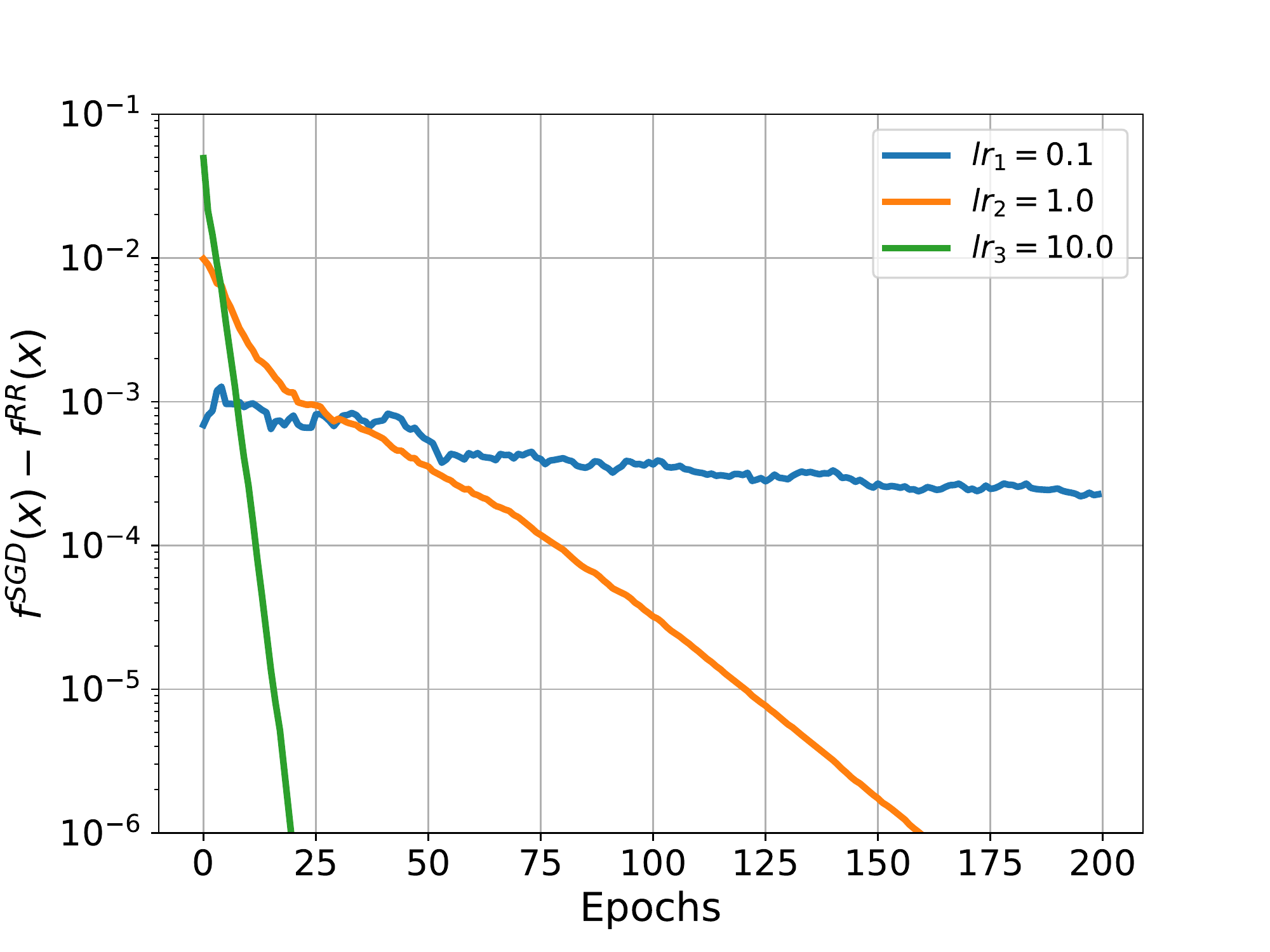}
        \caption{Squared loss}
        \label{fig:rbf_b}
    \end{subfigure}
    \begin{subfigure}{0.32\textwidth}
        \includegraphics[width=\textwidth]{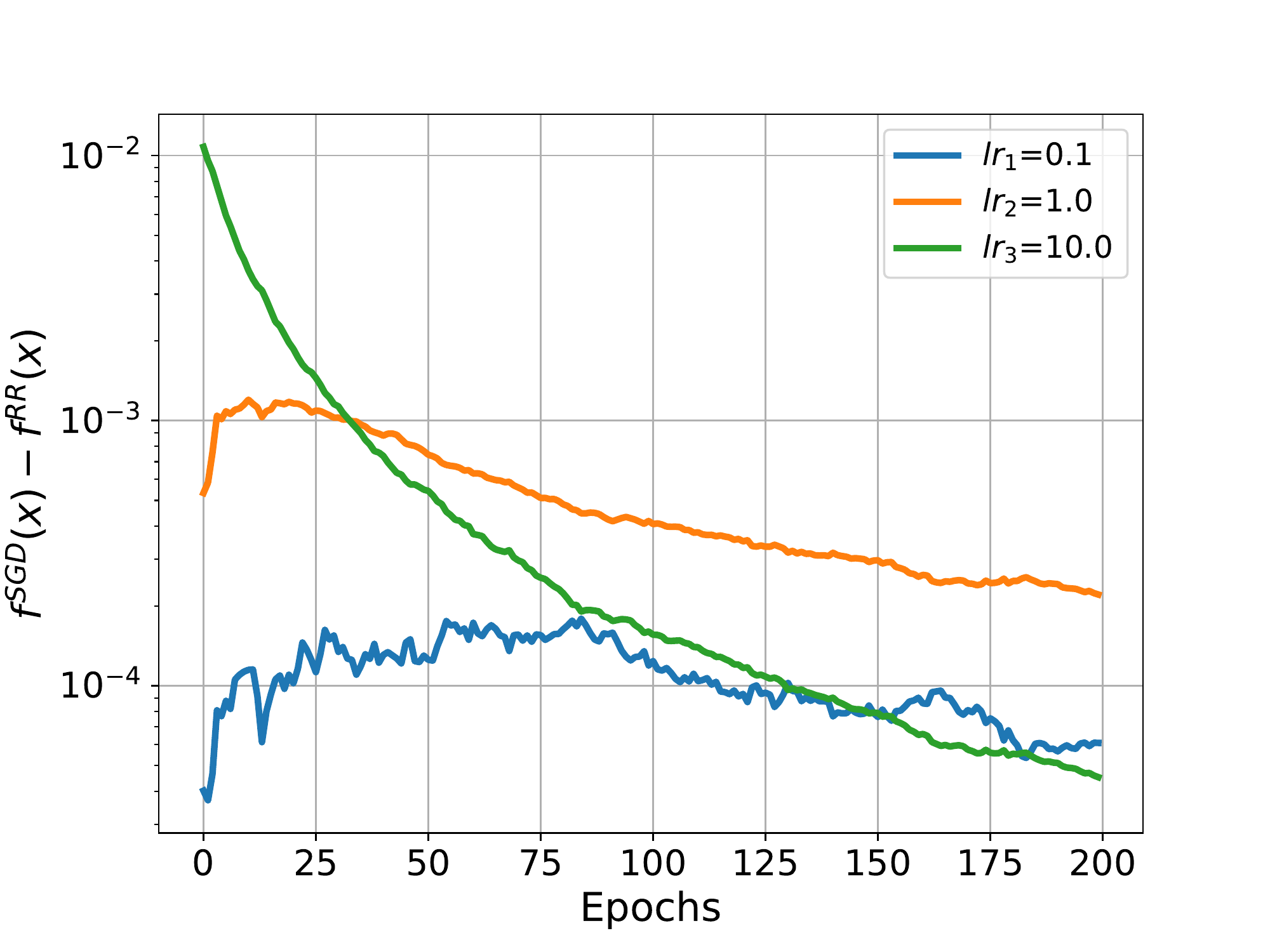}
        \caption{Logistic loss}
        \label{fig:rbf_c}
    \end{subfigure}
    \caption{Binary classfication on mushrooms dataset using RBF kernels. (a) We plot the difference in the squared loss between SGD and RR as a function of epochs for step sizes ($lr$) \textcolor{blue}{$0.1$}, \textcolor{orange}{$1.0$}, and \textcolor{green}{10.0}. (b)(c) Similar plot as (a) but for the squared loss and the logistic loss respectively. We observe that RR converges faster than SGD for all the step sizes, and the train loss difference is more significant in the initial training stage. Considering logistic loss and $lr = 10.0$, the train loss gap is $\tilde{\mathcal{O}}(10^{-2})$ after one epoch, and decreases to $\tilde{\mathcal{O}}(10^{-4})$ in the end. Similarly, we observe a faster decrease in the train loss difference within the first $25$ epochs of training when squared hinge loss or squared loss is used.}
    \label{fig:figure3}
\end{figure*}
\subsection{Multi-class classification using deep networks} \label{sec:mnist}
We conducted experiments to study the convergence rates of RR in non-convex settings under interpolation. To this end, we use a softmax loss and one hidden-layer multilayer perceptrons (1-MLP) of different widths on the MNIST image classification dataset \cite{lecun1998gradient}. The results in Figure \ref{fig:2} show that RR can converge faster than SGD for a small $T$, and a small $n$ favors RR while for large $n$ there is less of an advantage to using RR. 

The key take-aways from our experimental results is that for over-parameterized problems:
\begin{enumerate}
    \item RR consistently outperforms IG.
    \item The condition $n \ll \rho$ is crucial for RR to outperform SGD. 
    \item Under over-parametrization, RR can outperform SGD already at  the early stage of training (a large epoch is not required as suggested by previous theory for under-parameterized models).
\end{enumerate} 

\begin{figure}[h!]
	\centering     
    \centering
    \begin{subfigure}{0.32\textwidth}
        \includegraphics[width=\textwidth]{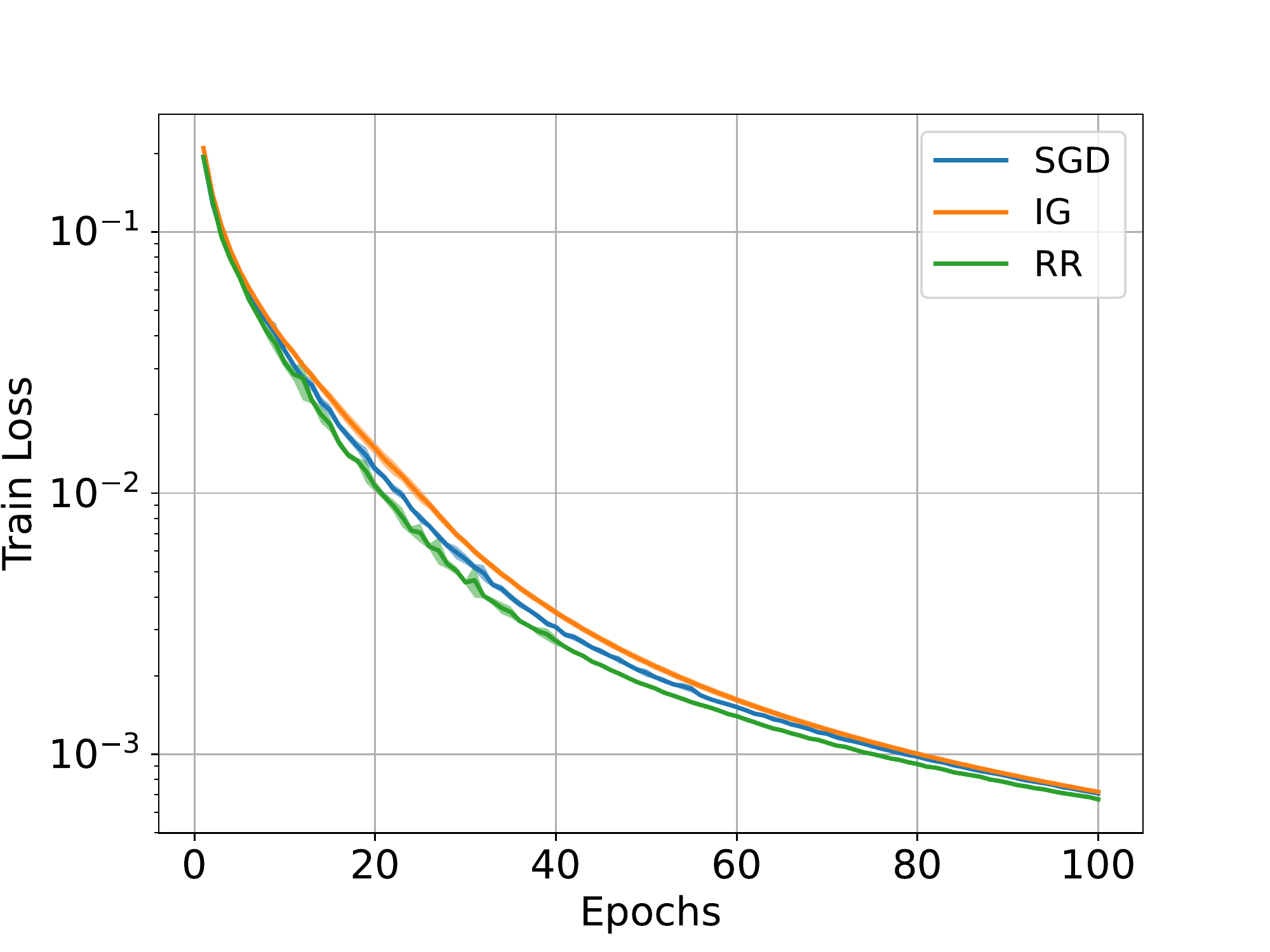}
        \caption{Train loss $(W=100)$}
        \label{fig:mnist_a}
    \end{subfigure}
    \begin{subfigure}{0.32\textwidth}
        \includegraphics[width=\textwidth]{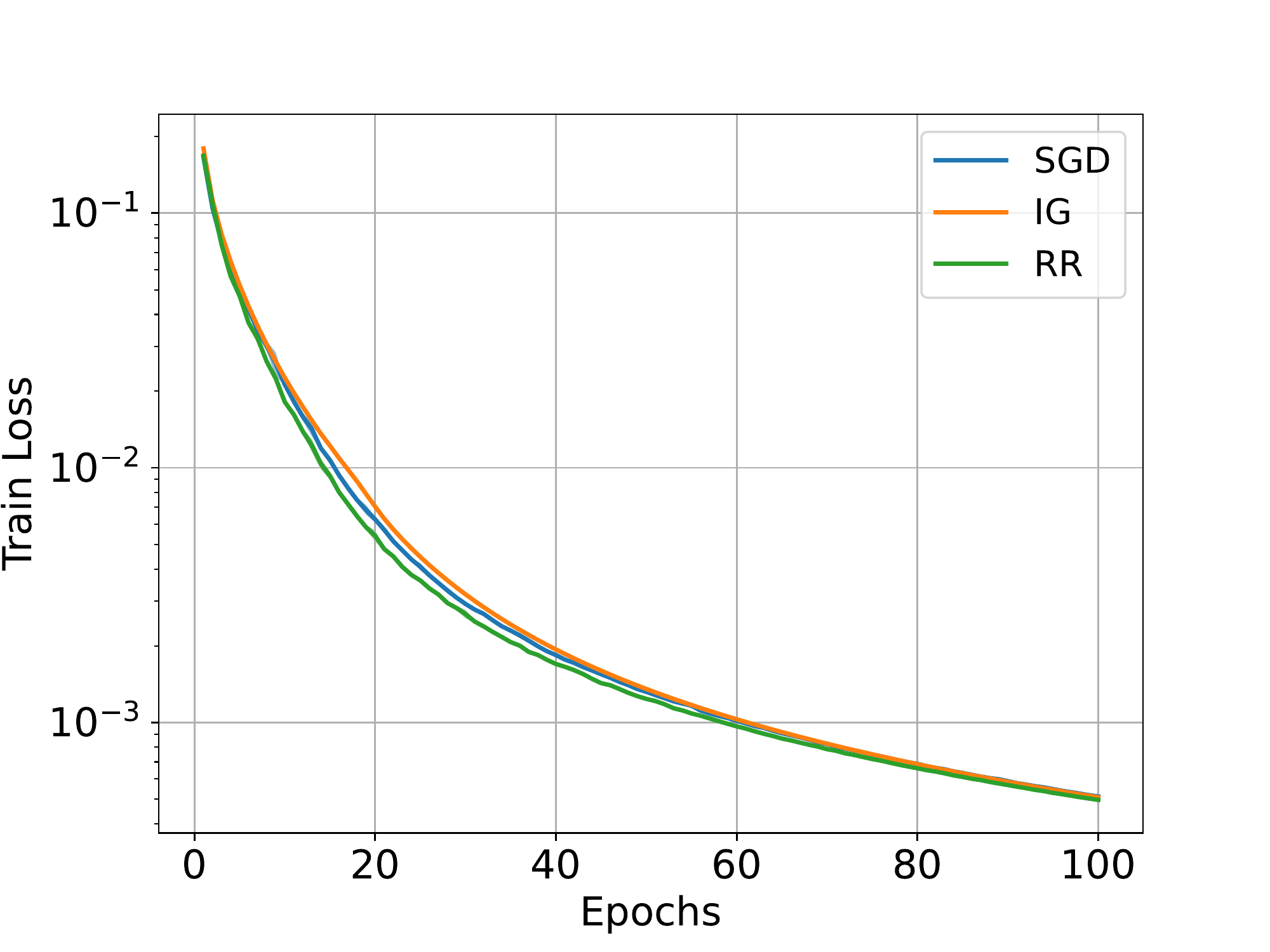}
        \caption{Train loss $(W=1000)$}
        \label{fig:mnist_b}
    \end{subfigure}
    \begin{subfigure}{0.32\textwidth}
        \includegraphics[width=\textwidth]{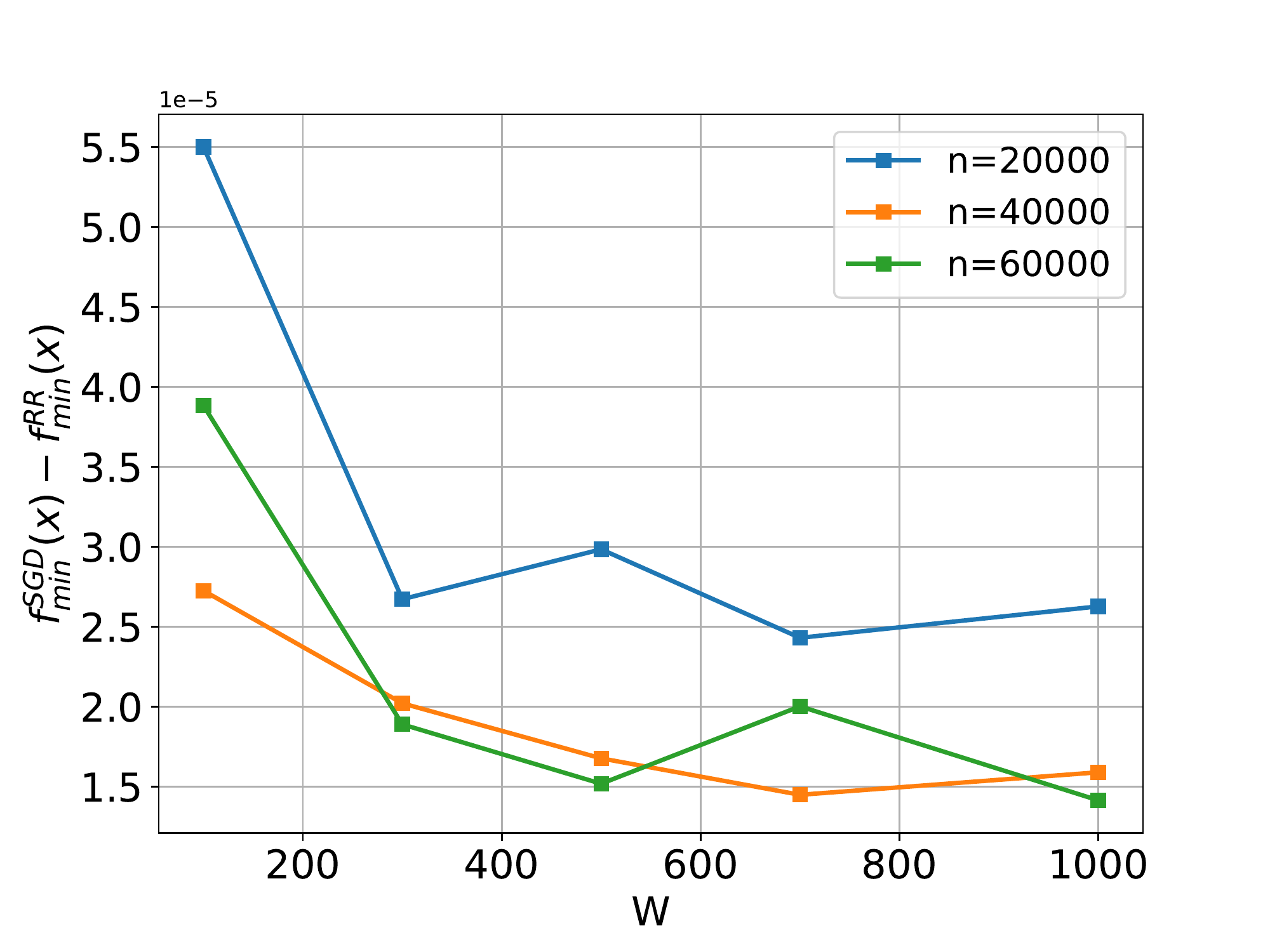}
        \caption{Different $n$s}
        \label{fig:mnist_c}
    \end{subfigure}
	\caption{MNIST digits classification with a softmax loss. (a) (b) We plot train loss as a function of epochs for 1-MLP of width ($W$) 100 and 1000 respectively. In both cases, we observe that RR converges faster than SGD in the initial training stage. (c) We plot minimum train loss difference between RR and SGD for $n=20000, 40000$ and $60000$ against different values of $W$. Despite RR outperforming SGD for all $W$s and $n$s, the performance gap is more significant when $n = 20000$.}
	\label{fig:2}
\end{figure}

\section{Conclusion}
In this paper, we have derived convergence rates of RR under interpolation as implied by the SGC or the WGC for $\mu$-PL objectives. In this setting, RR converge faster than SGD provided the key condition $\rho > n$ under the SGC or $\alpha\frac{L}{\mu} > n$ under the WGC holds (assuming the individual Lipschitz constants do not differ too much from the overall Lipschitz constant). 
Moreover, we show that RR outperforms IG for all values  of $\rho$ and $\alpha$ under the SGC and WGC respectively. 
Besides this, we further demonstrate that IG can outperform SGD when $\rho > n^2$ holds under the SGC or $\alpha\frac{L}{\mu} > n^2$ holds under the WGC. We remark that none of these conclusions follows from previous analysis under the strong convexity assumption. Our experimental results support these theoretical findings. 

\subsubsection{Ethical Statement} The contribution is the theoretical analysis of an existing algorithm, so it does not have direct societal or ethical implications.

\bibliographystyle{splncs04}
\bibliography{refs}
%





\clearpage
\appendix
\section{More Results for the SGC and WGC}
\label{app:SGCWGC}

The proposition below extends Vaswani et al.~\cite[Lemma~1]{vaswani2019fast} to a  function class that includes squared-hinge or logistic losses as special cases. 
\begin{proposition}\label{sgc_more}
    Assume binary linearly-separable dataset $(a_i,y_i), i\in[n]$ of size $n$ with margin $\tau:=\arg\max_{\|x\|_2=1}\min_{i\in[n]}y_ia_i^Tx>0$, normalized features $\|a_i\|_2\leq 1$ and $y_i\in\{\pm1\}$. Let $f(x;i)=\ell(y_ia_i^Tx)$ for a smooth monotonic function $\ell$. Then,  SGC \eqref{eq:SGC} holds with $\rho \leq {n}/{\tau^2}$. 
\end{proposition}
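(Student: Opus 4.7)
The plan is to directly verify the SGC definition for $f(x;i) = \ell(y_i a_i^T x)$ by lower bounding $\|\nabla f(x)\|^2$ via the max-margin direction and upper bounding $\sum_i \|\nabla f(x;i)\|^2$ in terms of the \emph{same} scalar quantity $\sum_i |\ell'(y_i a_i^T x)|$. First I would compute $\nabla f(x;i) = \ell'(y_i a_i^T x)\, y_i a_i$ and use $\|a_i\|_2 \leq 1$ to obtain $\|\nabla f(x;i)\|^2 \leq \ell'(y_i a_i^T x)^2$. Then, because $\ell$ is monotonic, $\ell'$ has a single sign on the real line, so the scalars $c_i := \ell'(y_i a_i^T x)$ all have the same sign.

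Next, let $x^\star$ be a unit vector achieving the margin, i.e.\ $y_i a_i^T x^\star \geq \tau$ for all $i$. Taking the inner product of the full gradient with $\pm x^\star$ (with sign chosen to match that of the $c_i$) gives
\begin{align}
\|\nabla f(x)\|_2 \;\geq\; |\langle \nabla f(x), x^\star\rangle| \;=\; \frac{1}{n}\sum_{i=1}^n |c_i|\, y_i a_i^T x^\star \;\geq\; \frac{\tau}{n}\sum_{i=1}^n |c_i|, \nonumber
\end{align}
where the first inequality is Cauchy--Schwarz with $\|x^\star\|=1$, and the equality uses that all $c_i y_i a_i^T x^\star$ have the same (nonnegative) sign so absolute value passes inside the sum. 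Squaring gives $\big(\sum_i |c_i|\big)^2 \leq \tfrac{n^2}{\tau^2}\|\nabla f(x)\|_2^2$.

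Finally I would bound the sum of squares by the square of the sum: since $|c_i|\geq 0$, $\sum_i c_i^2 \leq \big(\max_j |c_j|\big)\sum_i |c_i| \leq \big(\sum_i |c_i|\big)^2$. Combining,
\begin{align}
\frac{1}{n}\sum_{i=1}^n \|\nabla f(x;i)\|_2^2 \;\leq\; \frac{1}{n}\sum_{i=1}^n c_i^2 \;\leq\; \frac{1}{n}\Big(\sum_{i=1}^n |c_i|\Big)^2 \;\leq\; \frac{n}{\tau^2}\,\|\nabla f(x)\|_2^2, \nonumber
\end{align}
which is exactly the SGC with $\rho \leq n/\tau^2$.

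\textbf{Where I expect the work to be.} The proof is short; the only nontrivial idea is recognizing that one should test $\nabla f(x)$ against the margin direction $x^\star$ rather than against $x$ itself, which is what allows the uniform lower bound $y_i a_i^T x^\star \geq \tau$ to be used. The monotonicity of $\ell$ is what makes the sign cancellation argument go through (so that absolute values pass inside the sum); without it one only gets a bound involving $\sum_i c_i y_i a_i^T x^\star$ which can have sign cancellations. The step $\sum c_i^2 \leq (\sum|c_i|)^2$ is the only place where the factor $n$ (rather than $1$) enters, and this is what produces the $n/\tau^2$ scaling and matches the generalization beyond the specific losses treated in Vaswani et al.
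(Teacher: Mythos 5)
Your proof is correct and follows essentially the same route as the paper's: test $\nabla f(x)$ against the max-margin direction, use monotonicity of $\ell$ so all the scalars $\ell'(y_ia_i^Tx)$ share a sign, and pay the factor $n$ when passing from $\bigl(\sum_i|c_i|\bigr)^2$ down to $\sum_i c_i^2$ (the paper phrases this last step as dropping the nonnegative cross terms in the expanded square, which is the same inequality). No substantive difference.
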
 
\begin{proof}
We consider smooth monotonic non-increasing functions of the form $f_i(x) = l(y_i a_i^T x) = l(z_i^T x)$, where $z_i = y_i a_i$ and $a_i$ is the feature vector for the $i$th sample. We assume the values of $z_i$ are properly normalized such that $\max\limits_{i}\lVert z_i \rVert \leq 1$. Define $x^* = \argmax\limits_{\lVert x \rVert_2 = 1} \min\limits_i z_i^t x$ and $\tau = \max\limits_{\lVert x \rVert_2=1}\min\limits_i z_i^T x$. Then we have 
\begin{align}
    \lVert  \frac{1}{n} \sum_{i=1}^n \nabla f(x;i) \rVert^2_2 &=  \lVert \frac{1}{n} \sum_{i=1}^n l^{'}(z_i^T x) z_i \rVert_2^2.\\
\end{align}
Note that for a vector $u$, its $2$-norm is $\lVert u \rVert_2 = \max\limits_{\lVert v \rVert_2 = 1} v^T u$. Hence, we have the following 
\begin{align}
    \lVert  \frac{1}{n} \sum_{i=1}^n \nabla f(x;i) \rVert^2_2 &\geq \bigl( \frac{1}{n} \sum_{i=1}^n l^{'}(z_i^T x) z_i^T x^*\bigr)^2 \nonumber \\
    &= \tau^2 \bigl( \frac{1}{n} \sum_{i=1}^n l^{'}(z_i^T x)\bigr)^2 \nonumber \\
    &= \tau^2 \Bigl\{ \frac{1}{n^2} \sum_{i=1}^n l^{'}(z_i^T x)^2+\frac{1}{n^2} \sum_{i \neq j} l^{'}(z_i^T x) l^{'}(z_j^T x)\Bigr\} \nonumber \\
    &\geq \frac{\tau^2}{n} \Bigl( \frac{1}{n} \sum_{i=1}^n l^{'}(z_i^T x)^2\Bigr) \label{eq:prop1_line1}\\ 
    &\geq \frac{\tau^2}{n} \Bigl( \frac{1}{n} \sum_{i=1}^n l^{'}(z_i^T x)^2 \lVert z_i \rVert^2\Bigr) \nonumber\\
    &= \frac{\tau^2}{n} \frac{1}{n} \sum_{i=1}^n \lVert \nabla f(x;i)\rVert^2,
\end{align}
where the inequality in \eqref{eq:prop1_line1} follows because $l$ is monotonic, that is $l^{'}(t_1)l^{'}(t_2) \geq 0$ $\forall t_1,t_2 \in \mathbb{R}$. Therefore, we have the following 
\begin{align}
    \frac{1}{n}\sum_{i=1}^n \lVert \nabla f(x;i)\rVert^2 &\leq \frac{n}{\tau^2} \lVert  \frac{1}{n} \sum_{i=1}^n \nabla f(x;i) \rVert^2_2 \nonumber, 
\end{align}
hence we obtain $\rho \leq \frac{n}{\tau^2}$. 
\end{proof}

The next proposition was shown in prior work~\cite{mishkin2020interpolation} but was stated in a different way.
\begin{proposition}\label{prop:prop_sgc_wgc}
    Suppose $f$ satisfies Assumption \ref{smooth} and Assumption \ref{SGC} with some constant $\rho \geq 1$, then it also satisfies Assumption \ref{WGC} with $\alpha \leq \rho$. 
\end{proposition}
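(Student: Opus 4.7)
The plan is to combine the SGC inequality with a standard consequence of $L$-smoothness that bounds the squared full-gradient norm by the suboptimality gap. Since Assumption \ref{smooth} tells us $f$ is $L$-smooth and lower bounded by $f^* = f(x^*)$, the descent lemma applied at the point $x - \tfrac{1}{L}\nabla f(x)$ yields the classical inequality
\begin{align}
\|\nabla f(x)\|^2 \leq 2L\bigl(f(x) - f^*\bigr), \qquad \forall x \in \dom(f).\nn
\end{align}
This is the only ``analysis'' step; everything else is substitution.

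Next, I would invoke Assumption \ref{SGC} directly: for all $x$,
\begin{align}
\frac{1}{n}\sum_{i=1}^n \|\nabla f(x;i)\|^2 \leq \rho\, \|\nabla f(x)\|^2.\nn
\end{align}
Chaining the two displays gives
\begin{align}
\frac{1}{n}\sum_{i=1}^n \|\nabla f(x;i)\|^2 \leq 2\rho L\bigl(f(x) - f^*\bigr),\nn
\end{align}
which is precisely Assumption \ref{WGC} with constant $\alpha = \rho$. Since the WGC holds for every $\alpha' \geq \alpha$, we conclude $\alpha \leq \rho$.

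There is no real obstacle here; the proposition is essentially a one-line consequence of the gradient-norm/suboptimality bound implied by smoothness. The only point to check carefully is that the smoothness-based bound $\|\nabla f(x)\|^2 \leq 2L(f(x) - f^*)$ requires only that $f^*$ is a global lower bound of $f$ (not strong convexity or convexity), which is exactly what Assumption \ref{smooth} provides. Thus no additional structure beyond what Assumption \ref{smooth} and \ref{SGC} give is needed, and the proof will be short.
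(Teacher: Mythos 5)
Your proof is correct and is essentially identical to the paper's: both chain the SGC bound with the standard $L$-smoothness inequality $\|\nabla f(x)\|^2 \le 2L(f(x)-f^*)$ to obtain the WGC with $\alpha \le \rho$. Your remark that this inequality needs only the global lower bound $f^*$ (not convexity) is exactly the right point to check, and matches the paper's reliance on the same standard result.
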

\begin{proof}
   $\frac{1}{n}\sum_{i=1}^n \lVert \nabla f(x;i)\rVert^2 \stackrel{(a)}{\le} \rho \lVert \nabla f(x)\rVert^2 \stackrel{(b)}{\le} 2 L \rho (f(x) - f(x^*))$, where (a) is based on Assumption \ref{SGC}, and (b) is based on a standard result of $L$-smooth function (see Garrigos and Gower \cite[Lem.~2.28]{garrigos2023handbook} for a proof). Therefore, the WGC is satisfied with $\alpha \leq \rho$.
\end{proof}

Below is a result that gives a lower bound on the parameter $\alpha$ for a smooth $\mu$-PL objective satifies the WGC. 
\begin{proposition}\label{prop:alpha_wgc} 
Suppose f satisfies Assumption \ref{def_pl}, \ref{smooth} and \ref{WGC}, then the parameter $\alpha$ in the WGC satisfies $\alpha \geq \frac{\mu}{L}$. 
\end{proposition}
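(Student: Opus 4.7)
The plan is to chain together three inequalities: Jensen's inequality applied to the squared norm of the average gradient, the WGC, and the PL inequality. Specifically, I would first observe that since $\|\cdot\|^2$ is convex,
\begin{align}
\|\nabla f(x)\|^2 \;=\; \Bigl\|\tfrac{1}{n}\sum_{i=1}^n \nabla f(x;i)\Bigr\|^2 \;\leq\; \tfrac{1}{n}\sum_{i=1}^n \|\nabla f(x;i)\|^2. \nonumber
\end{align}
Combining this with the WGC (Assumption~\ref{WGC}) immediately yields $\|\nabla f(x)\|^2 \leq 2\alpha L\,(f(x) - f^*)$ for every $x \in \dom(f)$.

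Next, I would invoke the $\mu$-PL inequality, which says that $2\mu\,(f(x) - f^*) \leq \|\nabla f(x)\|^2$. Chaining the two bounds gives
\begin{align}
2\mu\,(f(x) - f^*) \;\leq\; \|\nabla f(x)\|^2 \;\leq\; 2\alpha L\,(f(x) - f^*) \quad \forall x \in \dom(f). \nonumber
\end{align}

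Finally, to conclude $\alpha \geq \mu/L$, I would pick any $x \in \dom(f)$ with $f(x) > f^*$ (which exists unless $f$ is constant, a degenerate case in which the statement holds vacuously for any positive $\alpha$ since the PL constant $\mu$ can be chosen arbitrarily large, so one would then argue separately that $f \equiv f^*$ makes both sides zero and there is nothing to prove). Dividing the displayed chain by $2(f(x)-f^*) > 0$ gives $\mu \leq \alpha L$, i.e., $\alpha \geq \mu/L$, as desired.

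The argument is essentially a one-line chain, so there is no real obstacle; the only subtlety is the degenerate constant-function case, which I would handle by noting that if $f(x) = f^*$ for all $x$ then $\nabla f \equiv 0$, so the WGC together with nonnegativity of $\|\nabla f(x;i)\|^2$ forces $\nabla f(x;i) \equiv 0$, and under interpolation the claim $\alpha \geq \mu/L$ is meaningful only when a nontrivial $\mu$-PL constant exists, which in turn requires $f$ to be nonconstant.
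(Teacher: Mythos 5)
Your proof is correct and follows essentially the same route as the paper's: chain the $\mu$-PL inequality, Jensen's inequality on the squared norm of the averaged gradients, and the WGC, then divide by $2(f(x)-f^*)$ for a nonminimal $x$. The only difference is your explicit treatment of the degenerate constant-function case, which the paper leaves implicit.
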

\begin{proof}
    \begin{align}
        2\mu(f(x)-f(x^*)) \stackrel{(a)}{\leq} \lVert \nabla f(x) \rVert^2 \stackrel{(b)}{\leq} \frac{1}{n} \sum_{i=1}^n
 \lVert \nabla f(x;i) \rVert^2 \stackrel{(c)}{\leq} 2\alpha L(f(x)-f(x^*)),
 \end{align}
 where (a) is based on Assumption \ref{def_pl}; (b) is based on Jensen's inequality and the convexity of squared norms; (c) is based on Assumption \ref{WGC}. 
\end{proof}

\section{SGD Proofs under Interpolation} \label{sec:sgd_proofs}
\begin{theorem}[SGD for $\mu$-PL under WGC] \label{thm:sgd_wgc}
Suppose $f$ satisfies Assumption \ref{smooth} and \ref{WGC}, then SGD with a constant learning rate $\eta = \frac{\mu}{L^2 \alpha}$ has the following convergence rate for $\mu$-PL objectives:
\begin{align}
    \mathbb{E}[f(x^{K+1}) - f(x^*)] \leq (1-\frac{\mu^2}{L^2 \alpha})^K (f(x^0)-f(x^*)),  
\end{align}
where $K$ is the total number of iterations.
\end{theorem}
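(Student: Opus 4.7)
The plan is to use the standard ``one-step descent'' template for SGD under smoothness, then upper bound the stochastic variance using the WGC, then use PL to convert the gradient norm into a functional gap, and finally tune $\eta$.

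First I would start from the $L$-smoothness descent inequality applied to one SGD step $x^{k+1} = x^k - \eta \nabla f(x^k; i_k)$ with $i_k$ uniform on $\{1,\dots,n\}$:
\begin{align}
f(x^{k+1}) \le f(x^k) - \eta \langle \nabla f(x^k), \nabla f(x^k;i_k)\rangle + \tfrac{L\eta^2}{2}\|\nabla f(x^k;i_k)\|^2. \nn
\end{align}
Taking conditional expectation over $i_k$ given $x^k$, the cross term becomes $-\eta\|\nabla f(x^k)\|^2$ since $\mathbb{E}[\nabla f(x^k;i_k)\mid x^k]=\nabla f(x^k)$, and the stochastic second-moment term becomes $\tfrac{1}{n}\sum_{i=1}^n\|\nabla f(x^k;i)\|^2$, which is exactly the quantity bounded by the WGC (Assumption~\ref{WGC}) by $2\alpha L(f(x^k)-f^*)$. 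This yields
\begin{align}
\mathbb{E}[f(x^{k+1})\mid x^k] \le f(x^k) - \eta\|\nabla f(x^k)\|^2 + L^2\alpha\eta^2 (f(x^k)-f^*). \nn
\end{align}

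Next I would apply the $\mu$-PL inequality (Assumption~\ref{def_pl}) in the form $\|\nabla f(x^k)\|^2 \ge 2\mu(f(x^k)-f^*)$ to get, after subtracting $f^*$ from both sides,
\begin{align}
\mathbb{E}[f(x^{k+1})\mid x^k] - f^* \le \bigl(1 - 2\mu\eta + L^2\alpha\eta^2\bigr)(f(x^k) - f^*). \nn
\end{align}
Plugging in the prescribed step size $\eta = \mu/(L^2\alpha)$, the contraction factor simplifies to $1 - \mu^2/(L^2\alpha)$. Finally, I would take full expectation, unroll the recursion over $k=0,1,\dots,K-1$, and conclude
\begin{align}
\mathbb{E}[f(x^{K+1})-f^*] \le \Bigl(1-\tfrac{\mu^2}{L^2\alpha}\Bigr)^K (f(x^0)-f^*). \nn
\end{align}

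There is no real obstacle here: the proof is a textbook combination of smoothness, WGC (for the second moment of the stochastic gradient), and PL (for converting $\|\nabla f\|^2$ into a functional gap). The only small thing to be careful about is that the WGC gives the bound in terms of $f(x^k)-f^*$ directly (no $\|\nabla f(x^k)\|^2$ term on the right-hand side), so the quadratic-in-$\eta$ term acts as noise against the linear descent provided by PL, and the step $\eta=\mu/(L^2\alpha)$ is precisely the one that maximizes the resulting contraction rate $2\mu\eta - L^2\alpha\eta^2$.
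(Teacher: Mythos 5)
Your proposal is correct and follows essentially the same route as the paper's proof: a one-step smoothness descent, unbiasedness of the stochastic gradient for the cross term, the WGC to bound the second moment by $2\alpha L(f(x^k)-f^*)$, the PL inequality to obtain the contraction factor $1-2\mu\eta+L^2\alpha\eta^2$, and the step size $\eta=\mu/(L^2\alpha)$ that optimizes it. No gaps.
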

\begin{proof}
 Take expectation conditioned on $x^k$
 \begin{align}
     \mathbb{E}_k[f(x^{k+1})] &\stackrel{(a)}{\leq} f(x^k) - \langle \nabla f(x^k), \mathbb{E}_k[x^{k+1} - x^k]\rangle + \frac{L}{2} \mathbb{E}_k[\lVert x^{k+1} - x^k\rVert^2] \nn \\
     &\stackrel{(b)}{=} f(x^k) - \eta_k \lVert \nabla f(x^k) \rVert^2 + \frac{L \eta_k^2}{2} \mathbb{E}_k[\lVert \nabla f(x^k;i_k)\rVert^2] \nn \\
    &\stackrel{(c)}{\leq} f(x^k) - \eta_k 2 \mu (f(x^k) - f(x^*)) + L^2 \eta_k^2 \alpha (f(x^k) - f(x^*)),
 \end{align}
 where (a) is based Assumption \ref{smooth}; (b) is by $\mathbb{E}[\nabla f(x^k;i_k)] = \nabla f(x^k)$; (c) is based on Assumption \ref{WGC} and \ref{def_pl}. Take total expectation and subtract $f(x^*)$ on both sides
 \begin{align}
     \mathbb{E}[f(x^{k+1})-f(x^*)] &\leq [1-2\mu \eta_k(1-\frac{L^2\eta_k \alpha}{2\mu})] \mathbb{E}[f(x^{k})-f(x^*)] \nn \\
     &\stackrel{(d)}{=} (1-\frac{\mu^2}{L^2 \alpha}) \mathbb{E}[f(x^k)-f(x^*)],
 \end{align}
 where (d) is obtained by substituting $\eta_k = \eta =\frac{\mu}{L^2 \alpha}$. Then solve the relationship recursively.
\end{proof}

\section{Useful Lemmas} \label{sec:usef_lemmas}
We draw ideas from Nguyen et al.~\cite{nguyen2021unified} and Mishchenko et al.~\cite{mishchenko2020random} to construct our proofs. The high-level idea is to first bound the decrease in the objective value (see Lemma \ref{lem_descent}), then bound the progression term $\sum_{i=0}^{n-1}\lVert x_i^t - x_0^t\rVert^2$ for IG and  $\mathbb{E}[\sum_{i=0}^{n-1}\lVert x_i^t - x_0^t\rVert^2]$ for RR respectively (see Lemma \ref{lem_ig} and Lemma \ref{lem_rr}).
In this section, we present these lemmas that will be used in the theory proofs. 
Lemma \ref{lem_exp} is a restatement of Mishchenk et al.~\cite[Lemma 1]{mishchenko2020random}.

\begin{lemma} \label{lem_exp}
Let $X_1,...,X_n$ be a given set of vectors in $\mathbb{R}^d$, denote their average to be $\bar{X} = \frac{1}{n}\sum_{i=1}^n X_i$ and population variance to be $\sigma^2 = \frac{1}{n}\sum_{i=1}^n\lVert X_i - \bar{X}\rVert^2$. Fix k $\in\{1,...,n\}$, let $X_{\pi_1},...,X_{\pi_k}$ be sampled uniformly without replacement from $\{X_1,...,X_n\}$ and $\bar{X}_{\pi}$ be their average. Then the following hold true 
\begin{align}
    \mathbb{E}[\bar{X}_{\pi}] = \bar{X} \quad \quad \quad \mathbb{E}[\lVert \bar{X}_\pi - \bar{X} \rVert^2] = \frac{n-k}{k(n-1)}\sigma^2.    
\end{align}
\end{lemma}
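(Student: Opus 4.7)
The lemma has two parts, and both are classical facts about uniform sampling without replacement; my plan is to prove them using symmetry of the random permutation and the identity $\sum_{i=1}^n (X_i - \bar X) = 0$.

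For the unbiasedness $\mathbb{E}[\bar X_\pi] = \bar X$, I would argue by symmetry of the uniform-without-replacement distribution: each index $\pi_j$ (for a fixed position $j \in \{1,\dots,k\}$) is marginally uniform on $\{1,\dots,n\}$, so $\mathbb{E}[X_{\pi_j}] = \bar X$. Averaging over $j = 1, \dots, k$ then gives $\mathbb{E}[\bar X_\pi] = \bar X$.

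For the variance formula, I would first center the vectors by setting $Y_i \triangleq X_i - \bar X$, so that $\sum_{i=1}^n Y_i = 0$ and $\frac{1}{n}\sum_{i=1}^n \|Y_i\|^2 = \sigma^2$. Expanding the squared norm gives
\begin{equation*}
\mathbb{E}\bigl[\|\bar X_\pi - \bar X\|^2\bigr] = \frac{1}{k^2}\Bigl(\sum_{j=1}^k \mathbb{E}[\|Y_{\pi_j}\|^2] + \sum_{j\neq l} \mathbb{E}[\langle Y_{\pi_j}, Y_{\pi_l}\rangle]\Bigr).
\end{equation*}
By symmetry every diagonal term equals $\sigma^2$, and every off-diagonal term has the same expectation $c \triangleq \mathbb{E}[\langle Y_{\pi_1}, Y_{\pi_2}\rangle]$. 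The key step is to pin down $c$ using the constraint $\sum_{i=1}^n Y_i = 0$. Applying this with a \emph{full} permutation ($k=n$) forces $\bar X_\pi = \bar X$, so the expectation of the squared norm is zero, yielding $n \sigma^2 + n(n-1) c = 0$, hence $c = -\sigma^2 / (n-1)$.

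Substituting back into the general-$k$ expansion gives
\begin{equation*}
\mathbb{E}\bigl[\|\bar X_\pi - \bar X\|^2\bigr] = \frac{k \sigma^2 - k(k-1)\sigma^2/(n-1)}{k^2} = \frac{n-k}{k(n-1)}\sigma^2,
\end{equation*}
which is the claimed identity. There is no real obstacle here; the only subtlety is recognizing that the all-of-them case ($k=n$) gives the covariance between two distinct sampled indices for free, avoiding a direct combinatorial computation of $\mathbb{P}(\pi_1 = i, \pi_2 = j)$.
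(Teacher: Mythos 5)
Your proof is correct. Note that the paper does not prove this lemma itself; it cites Mishchenko et al.'s Lemma~1, whose proof is the same standard argument: exchangeability for the diagonal and cross terms, plus the identity $\sum_{i\neq j}\langle Y_i,Y_j\rangle = \lVert\sum_i Y_i\rVert^2 - \sum_i\lVert Y_i\rVert^2 = -n\sigma^2$ to get the cross-covariance $c=-\sigma^2/(n-1)$. Your $k=n$ trick is just a repackaging of that identity (it is valid because the marginal law of a pair of distinct positions is uniform over ordered distinct index pairs for every $k\geq 2$), so the two routes are essentially identical.
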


Lemma \ref{lem_descent} is proved in Nguyen et al.~\cite[Lemma 8]{nguyen2021unified}. Here we replace $L$ with $L_{\max}$ and take the step size $\eta$ to be constant for all iterations and epochs. 

\begin{lemma}\label{lem_descent}
Suppose Assumption \ref{smooth} holds. Given a shuffling scheme $\{\pi^t\}_t$ and a constant step size $\eta$ such that $ \eta \leq \frac{1}{nL}$, we have the following:
\begin{align}
    f(x^{t+1}_0) \leq f(x^{t}_0) - \frac{n\eta}{2}\lVert \nabla f(x^t_0) \rVert^2 + \frac{L_{\max}^2 \eta}{2} \sum_{i=0}^n \lVert x_i^t - x_0^t \rVert^2. \label{eq:lem_descent_ig}
\end{align}
Taking total expectation over the randomness
\begin{align}
    \mathbb{E} [f(x^{t+1}_0)] \leq \mathbb{E} [f(x^{t}_0)] - \frac{n\eta}{2} \mathbb{E} [\lVert \nabla f(x^t_0) \rVert^2] + \frac{L_{\max}^2 \eta}{2} \sum_{i=0}^n \mathbb{E} [\lVert x_i^t - x_0^t \rVert^2]. \label{eq:lem_descent_rr}
\end{align}
\end{lemma}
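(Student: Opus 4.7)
The plan is to derive this one-epoch descent inequality by applying $L$-smoothness of $f$ to the whole epoch transition from $x_0^t$ to $x_0^{t+1} = x_n^t$, and then decomposing the accumulated update into a ``useful'' part aligned with $n\nabla f(x_0^t)$ plus a ``residual'' term that is controlled by the iterate deviations.

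First, I would write $x_0^{t+1} - x_0^t = -\eta \sum_{j=0}^{n-1} \nabla f(x_j^t;\pi^t_{j+1})$, and use the fact that $\pi^t$ is a permutation of $\{1,\dots,n\}$ to obtain $\sum_{j=0}^{n-1}\nabla f(x_0^t;\pi^t_{j+1}) = n\nabla f(x_0^t)$. This produces the decomposition $x_0^{t+1}-x_0^t = -\eta n\nabla f(x_0^t) - \eta r^t$, where $r^t := \sum_{j=0}^{n-1}\bigl[\nabla f(x_j^t;\pi^t_{j+1}) - \nabla f(x_0^t;\pi^t_{j+1})\bigr]$ is a sum of component-smoothness-controlled gradient differences, since each summand involves a fixed component $\pi^t_{j+1}$ evaluated at two different points.

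Next, I would apply the descent inequality implied by $L$-smoothness of $f$, namely $f(x_0^{t+1}) - f(x_0^t) \leq \langle \nabla f(x_0^t), x_0^{t+1}-x_0^t\rangle + \tfrac{L}{2}\lVert x_0^{t+1}-x_0^t\rVert^2$, substitute the decomposition, and expand. This produces terms in $\lVert\nabla f(x_0^t)\rVert^2$, $\langle \nabla f(x_0^t), r^t\rangle$, and $\lVert r^t\rVert^2$. I would bound the cross term by Young's inequality with balance parameter $n$, giving $|\langle \nabla f(x_0^t), r^t\rangle| \leq \tfrac{n}{2}\lVert\nabla f(x_0^t)\rVert^2 + \tfrac{1}{2n}\lVert r^t\rVert^2$. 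A direct algebraic computation, which crucially uses the step-size condition $\eta \leq 1/(nL)$ so that $1 - Ln\eta \geq 0$, shows that the cross-term pieces cancel against the $\tfrac{L n^2 \eta^2}{2}$ quadratic term, leaving exactly $-\tfrac{n\eta}{2}\lVert\nabla f(x_0^t)\rVert^2$ in front of the gradient-squared and exactly $\tfrac{\eta}{2n}\lVert r^t\rVert^2$ in front of the residual.

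Finally, I would bound $\lVert r^t\rVert^2$ by Cauchy--Schwarz followed by the per-component $L_i$-smoothness from Assumption~\ref{smooth}, obtaining $\lVert r^t\rVert^2 \leq n \sum_{j=0}^{n-1} L_{\pi^t_{j+1}}^2 \lVert x_j^t - x_0^t\rVert^2 \leq n L_{\max}^2 \sum_{j=0}^{n-1}\lVert x_j^t - x_0^t\rVert^2$. Combining with the coefficient $\tfrac{\eta}{2n}$ gives exactly the $\tfrac{L_{\max}^2\eta}{2}\sum_{j=0}^{n-1}\lVert x_j^t - x_0^t\rVert^2$ error term in the statement; the $j=n$ index can be added freely since that summand is nonnegative. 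The expectation form then follows by total expectation, since the whole argument is deterministic once $\pi^t$ is fixed. The main obstacle is the coefficient bookkeeping: it is not obvious a priori that Young's balance $n$ and the step size $\eta \leq 1/(nL)$ align so that the $\lVert\nabla f(x_0^t)\rVert^2$ coefficient collapses to exactly $-\tfrac{n\eta}{2}$ and the $\lVert r^t\rVert^2$ coefficient collapses to exactly $\tfrac{\eta}{2n}$, and verifying this cancellation is the only nontrivial calculation in the proof.
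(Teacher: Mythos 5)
Your proposal is correct: the decomposition $x_0^{t+1}-x_0^t=-\eta n\nabla f(x_0^t)-\eta r^t$, the Young's inequality with balance $n$, and the exact cancellation of the cross-term pieces under $\eta\le\frac{1}{nL}$ all check out, yielding precisely the coefficients $-\frac{n\eta}{2}$ and $\frac{\eta}{2n}$ and hence the stated bound after Cauchy--Schwarz and $L_{\max}$-smoothness of the components. This is essentially the same argument as the proof the paper defers to (Nguyen et al., Lemma~8, with $L$ replaced by $L_{\max}$ on the residual term), so there is nothing further to add.
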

The remaining Lemmas \ref{lem_ig}-\ref{lem_rr} concern the update rules in Eqn \eqref{eq:update2}. Recall that $x_0^{t+1} = x_n^t$ $\forall t \geq 0$. Also recall that for RR the permutation vector $\pi^t$ is chosen randomly at each epoch $t$ while it is fixed for IG.
Lemma \ref{lem_ig} will be combined with Lemma \ref{lem_descent} in the proofs of IG. Lemma \ref{lem_rr} will be combined with Lemma \ref{lem_descent}  in the proofs of RR. 
\begin{lemma}\label{lem_ig}
Suppose Assumptions \ref{smooth} hold. For a step size $\eta \leq \frac{1}{\sqrt{2}n L_{\max}}$, it holds for IG that
under Assumption \ref{SGC_noise},
\begin{align}
    \sum_{i=0}^{n-1} \lVert x_i^t - x_0^t \rVert^2 \leq 2\eta^2 n^3 \rho \lVert \nabla f(x_0^t)\rVert^2 + 2\eta^2 n^3 \sigma^2, \label{eq:lem_ig_sgc}
\end{align}
and under Assumption \ref{WGC_noise},
\begin{align}
    \sum_{i=0}^{n-1} \lVert x_i^t - x_0^t \rVert^2 \leq 4 \eta^2 n^3 \alpha L (f(x_0^t) - f(x^*)) + 2\eta^2 n^3 \sigma^2. \label{eq:lem_ig_wgc}
\end{align}
\end{lemma}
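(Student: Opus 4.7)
\textbf{Proof plan for Lemma~\ref{lem_ig}.} Since IG uses a fixed permutation $\pi^t$, the bound can be proved deterministically (no expectation needed). The starting point is the telescoping identity
\begin{equation*}
x_i^t - x_0^t = -\eta \sum_{j=0}^{i-1} \nabla f(x_j^t;\pi^t_{j+1}),
\end{equation*}
so by Cauchy--Schwarz $A_i := \lVert x_i^t - x_0^t\rVert^2 \leq \eta^2 i \sum_{j=0}^{i-1}\lVert \nabla f(x_j^t;\pi^t_{j+1})\rVert^2$. The plan is to split each gradient at $x_j^t$ into its value at the anchor $x_0^t$ and a smoothness-controlled remainder, namely
\begin{equation*}
\lVert \nabla f(x_j^t;\pi^t_{j+1})\rVert^2 \leq 2\lVert \nabla f(x_0^t;\pi^t_{j+1})\rVert^2 + 2L_{\max}^2 A_j,
\end{equation*}
using Assumption~\ref{smooth}. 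Substituting yields the recursive inequality
\begin{equation*}
A_i \leq 2\eta^2 i\, G_i + 2\eta^2 i L_{\max}^2 \sum_{j=0}^{i-1} A_j,\qquad G_i := \sum_{j=0}^{i-1}\lVert \nabla f(x_0^t;\pi^t_{j+1})\rVert^2.
\end{equation*}

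The next step, and the main technical obstacle, is to resolve this recursion cleanly so that the linear-in-$i$ factor survives (summing a term like $\eta^2 i G_i$ over $i$ will give the right $n^3$ scaling with constant $2$). I would do this by strong induction on $i$: assume $A_j \leq C\eta^2 j G_j$ for all $j < i$ and try to close the induction with $C=4$. Since $G_j \leq G_i$ and $\sum_{j=0}^{i-1} j \leq i^2/2$, the recursion gives $A_i \leq \eta^2 i G_i\bigl[2 + C\eta^2 i^2 L_{\max}^2\bigr]$, so the induction closes provided $C(1-\eta^2 i^2 L_{\max}^2)\geq 2$. Using $i\leq n$ and the step-size restriction $\eta \leq \tfrac{1}{\sqrt{2}\,nL_{\max}}$ we have $\eta^2 i^2 L_{\max}^2 \leq 1/2$, which is exactly what is needed to close with $C=4$. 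This is the step that drives the stated step-size requirement.

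Once $A_i \leq 4\eta^2 i G_i$ is established, I sum over $i\in\{0,\ldots,n-1\}$ and swap the order of summation. Because $\pi^t$ is a permutation of $\{1,\ldots,n\}$, $\sum_{j=0}^{n-1}\lVert \nabla f(x_0^t;\pi^t_{j+1})\rVert^2 = \sum_{k=1}^n \lVert \nabla f(x_0^t;k)\rVert^2$, and using $\sum_{i=j+1}^{n-1} i \leq n^2/2$ in the interchange yields
\begin{equation*}
\sum_{i=0}^{n-1} A_i \leq 4\eta^2 \sum_{i=0}^{n-1} i\, G_i \leq 2\eta^2 n^2 \sum_{k=1}^n \lVert \nabla f(x_0^t;k)\rVert^2.
\end{equation*}

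Finally, the two cases follow by plugging in the corresponding growth condition applied at $x_0^t$. Under Assumption~\ref{SGC_noise} the inner sum is at most $n\rho\lVert\nabla f(x_0^t)\rVert^2 + n\sigma^2$, giving \eqref{eq:lem_ig_sgc}; under Assumption~\ref{WGC_noise} it is at most $2n\alpha L(f(x_0^t)-f(x^*)) + n\sigma^2$, giving \eqref{eq:lem_ig_wgc}. The only delicate piece is the induction above; everything else is bookkeeping.
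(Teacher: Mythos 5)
Your argument is correct and follows essentially the same route as the paper's proof: the same anchoring of $\nabla f(x_j^t;\pi^t_{j+1})$ at $x_0^t$ via $L_{\max}$-smoothness, the same recursion $A_i \leq 2\eta^2 i\,G_i + 2\eta^2 i L_{\max}^2\sum_{j<i}A_j$, and the same use of $\eta \leq \tfrac{1}{\sqrt{2}nL_{\max}}$ to absorb the self-referential term. The only (harmless) difference is that you close the recursion by strong induction on $i$ with constant $C=4$ and then sum, whereas the paper sums over $i$ first and rearranges to divide by $1-\eta^2L_{\max}^2n^2$; both yield exactly the stated constants.
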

\begin{proof} 
By the update rule
\begin{align}
    \lVert x_i^t - x_0^t \rVert^2 &= \eta^2 \lVert \sum_{j=0}^{i-1} \nabla f(x_j^t;j+1) \rVert^2 \nonumber \\
    &= \eta^2 \lVert \sum_{j=0}^{i-1} \nabla f(x_j^t;j+1) - \nabla f(x_0^t;j+1) + \nabla f(x_0^t;j+1) \rVert^2 \nonumber \\
    &\stackrel{(a)}{\leq}  2\eta^2 \lVert \sum_{j=0}^{i-1} \nabla f(x_j^t;j+1) - \nabla f(x^t_0;j+1)\rVert^2 + 2\eta^2 \lVert \sum_{j=0}^{i-1} \nabla f(x_0^t;j+1) \rVert^2 \nonumber\\
     &\stackrel{(b)}{\leq} 2 \eta^2 i\sum_{j=0}^{i-1} \lVert \nabla f(x_j^t;j+1) - \nabla f(x_0^t;j+1) \rVert^2 + 2 \eta^2 i \sum_{j=0}^{i-1}\lVert \nabla f(x_0^t;j+1) \rVert^2 \nonumber\\
    &\stackrel{(c)}{\leq} 2 \eta^2 L_{\max}^2 i \sum_{j=0}^{i-1} \lVert x_j^t - x_0^t \rVert^2 + 2 \eta^2 i \sum_{j=0}^{i-1}\lVert \nabla f(x_0^t;j+1) \rVert^2, 
\end{align}
where in (a) and (b) are based on Jensen's inequality and convexity of squared norms, and (c) is based on Assumption \ref{smooth}. Summing over $i=0,...,n-1$ gives
\begin{align}
    \sum_{i=0}^{n-1} \lVert x_i^t - x_0^t \rVert^2 &\leq 2 \eta^2 L_{\max}^2 \sum_{i=0}^{n-1} i \sum_{j=0}^{i-1} \lVert x_j^t - x_0^t \rVert^2  + 2 \eta^2 \sum_{i=0}^{n-1} i \sum_{j=0}^{i-1} \lVert \nabla f(x_0^t;j+1) \rVert^2  \nonumber \\
     &\leq 2 \eta^2 L_{\max}^2 \sum_{i=0}^{n-1} i \sum_{j=0}^{n-1} \lVert x_j^t - x_0^t \rVert^2  + 2 \eta^2 \sum_{i=0}^{n-1} i \sum_{j=0}^{n-1} \lVert \nabla f(x_0^t;j+1) \rVert^2  \nonumber \\
    &\stackrel{(d)}{\leq} \eta^2 L_{\max}^2 n^2 \sum_{i=0}^{n-1} \lVert x_i^t - x_0^t \rVert^2 + \eta^2 n^3 [\frac{1}{n} \sum_{i=0}^{n-1} \lVert \nabla f(x_0^t;i+1) \rVert^2], 
\end{align}
where in (d) we have used $\sum_{i=0}^{n-1} i \leq \frac{n^2}{2}$. Rearrange we obtain the following: 
\begin{align}
    \sum_{i=0}^{n-1} \lVert x_i^t - x_0^t \rVert^2 &\leq \frac{\eta^2 n^3}{1- \eta^2 L_{\max}^2 n^2} [\frac{1}{n} \sum_{i=0}^{n-1} \lVert \nabla f(x_0^t;i+1) \rVert^2]. 
\end{align}
Choose a learning rate $\eta \leq \frac{1}{\sqrt{2} n L_{\max}}$ leads to
\begin{align}
      \sum_{i=0}^{n-1} \lVert x_i^t - x_0^t \rVert^2 &\leq 2 \eta^2 n^3 [\frac{1}{n} \sum_{i=0}^{n-1} \lVert \nabla f(x_0^t;i+1) \rVert^2].  
\end{align}
If Assumption \ref{SGC_noise} holds, then we obtain
\begin{align}
    \sum_{i=0}^{n-1} \lVert x_i^t - x_0^t \rVert^2 \leq 2 \eta^2 n^3 \rho \lVert \nabla f(x_0^t) \rVert^2 + 2 \eta^2 n^3 \sigma^2. 
\end{align}
If Assumption \ref{WGC_noise} holds, then we obtain
\begin{align}
     \sum_{i=0}^{n-1} \lVert x_i^t - x_0^t \rVert^2 \leq 4 \eta^2 n^3 \alpha L (f(x_0^t) - f(x^*)) + 2 \eta^2 n^3 \sigma^2. 
\end{align}
\end{proof}

\begin{lemma}\label{lem_rr}
Suppose Assumption \ref{smooth} hold. For a step size $\eta \leq \frac{1}{\sqrt{3} n L_{\max}}$, it holds for RR that
under Assumption \ref{SGC_noise},
\begin{align}
    \mathbb{E}\Bigl[\sum_{i=0}^{n-1}\lVert x_i^t - x_0^t \rVert^2\Bigr] \leq 2 n^2 \eta^2 (\rho+n) \mathbb{E}[\lVert \nabla f(x_0^t) \rVert^2] +  2 \eta^2 n^2 \sigma^2, \label{eq:lem_rr_sgc}
\end{align}
and under Assumption \ref{WGC_noise},
\begin{align}
    \mathbb{E}[\sum_{i=0}^{n-1}\lVert x_i^t - x_0^t\rVert^2]\leq 4\eta^2 n^2 \alpha L \mathbb{E}[f(x_0^t) - f(x^*)]+ 2 \eta^2 n^3 \mathbb{E}[\lVert \nabla f(x_0^t) \rVert^2] +  2 \eta^2 n^2 \sigma^2. \label{eq:lem_rr_wgc}
\end{align}
\end{lemma}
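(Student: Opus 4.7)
The plan is to mirror the structure of the proof of Lemma \ref{lem_ig} for IG but to exploit the randomness of the permutation $\pi^t$ via Lemma \ref{lem_exp} at the point where the IG argument applies a deterministic Jensen bound. Starting from the update rule, I would write $x_i^t - x_0^t = -\eta\sum_{j=0}^{i-1}\nabla f(x_j^t;\pi^t_{j+1})$, add and subtract $\nabla f(x_0^t;\pi^t_{j+1})$ inside the sum, and apply $\|a+b\|^2\leq 2\|a\|^2+2\|b\|^2$ together with Jensen and $L_{\max}$-smoothness to obtain the upper bound $2\eta^2 L_{\max}^2 i\sum_{j=0}^{i-1}\|x_j^t-x_0^t\|^2 + 2\eta^2 \|\sum_{j=0}^{i-1}\nabla f(x_0^t;\pi^t_{j+1})\|^2$, exactly as in Lemma \ref{lem_ig}.

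The key difference arises in the second term. Conditioning on $x_0^t$ (and the past), the indices $\pi^t_1,\ldots,\pi^t_i$ are a uniform size-$i$ sample without replacement from $\{1,\ldots,n\}$, so applying Lemma \ref{lem_exp} with $X_k = \nabla f(x_0^t;k)$ yields $\mathbb{E}[\|\sum_{j=0}^{i-1}\nabla f(x_0^t;\pi^t_{j+1})\|^2 \mid x_0^t] = i^2\|\nabla f(x_0^t)\|^2 + \tfrac{i(n-i)}{n-1}\sigma^2_{\mathrm{pop}}$, where $\sigma^2_{\mathrm{pop}} = \tfrac{1}{n}\sum_{k=1}^n\|\nabla f(x_0^t;k)-\nabla f(x_0^t)\|^2$. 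Dropping the mean gives $\sigma^2_{\mathrm{pop}} \leq \tfrac{1}{n}\sum_k \|\nabla f(x_0^t;k)\|^2$, which Assumption \ref{SGC_noise} bounds by $\rho\|\nabla f(x_0^t)\|^2 + \sigma^2$ and Assumption \ref{WGC_noise} bounds by $2\alpha L(f(x_0^t)-f(x^*)) + \sigma^2$. This is precisely where RR improves over IG: the IG argument effectively keeps an $i^2$ multiplier on the raw average of squared gradients, whereas here the $\tfrac{i(n-i)}{n-1}$ cross-term only scales like $n$ after summation, which is what turns the IG factor $n\rho$ into the RR factor $\rho+n$.

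The remaining steps are bookkeeping. Summing the conditional bound over $i=0,\ldots,n-1$ and using the elementary estimates $\sum_{i=0}^{n-1}i \leq n^2/2$, $\sum_{i=0}^{n-1}i^2 \leq n^3/3$, and $\sum_{i=0}^{n-1}\tfrac{i(n-i)}{n-1} \leq n^2/2$, gives a relation of the form $S \leq \eta^2 L_{\max}^2 n^2 S + R$ with $S = \sum_i \mathbb{E}[\|x_i^t-x_0^t\|^2\mid x_0^t]$ and $R$ collecting the $\|\nabla f(x_0^t)\|^2$, $(f(x_0^t)-f(x^*))$, and $\sigma^2$ contributions. The step-size choice $\eta \leq 1/(\sqrt{3}\,n L_{\max})$ forces $\eta^2 L_{\max}^2 n^2 \leq 1/3$, so the recursion closes as $S \leq \tfrac{3}{2}R$; taking total expectations and loosening the numerical constants ($\tfrac{3}{2}\leq 2$ and $3 \leq 4$) absorbs everything into the stated $2n^2\eta^2(\rho+n)$, $4\eta^2 n^2\alpha L$, and $2\eta^2 n^2\sigma^2$ coefficients.

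The most delicate step is the conditional application of Lemma \ref{lem_exp}: one must argue that, given $x_0^t$ and the filtration up through the end of epoch $t-1$, the prefix $(\pi^t_1,\ldots,\pi^t_i)$ of the freshly drawn uniformly random permutation $\pi^t$ is itself uniformly distributed over ordered $i$-subsets of $\{1,\ldots,n\}$, which is what licenses the variance identity with $\sigma^2_{\mathrm{pop}}$ evaluated at the \emph{fixed} point $x_0^t$. Once that independence is established, the remainder is a direct and slightly tighter parallel of the IG calculation already carried out in Lemma \ref{lem_ig}.
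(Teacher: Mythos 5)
Your proposal is correct and follows the same blueprint as the paper's proof: expand $x_i^t-x_0^t$ via the update rule, add and subtract gradients at $x_0^t$, control the drift term with $L_{\max}$-smoothness, invoke Lemma~\ref{lem_exp} for the without-replacement sample of $\{\nabla f(x_0^t;k)\}_k$, close the resulting self-referencing inequality with the step-size condition, and only then apply Assumption~\ref{SGC_noise} or~\ref{WGC_noise}. The one substantive difference is the decomposition: the paper performs a three-way split $\nabla f(x_j^t;\pi^t_{j+1})-\nabla f(x_0^t;\pi^t_{j+1})+\nabla f(x_0^t;\pi^t_{j+1})-\nabla f(x_0^t)+\nabla f(x_0^t)$ and pays a Jensen factor of $3$ on each piece, whereas you split only once and then use the exact bias--variance identity $\mathbb{E}\bigl[\lVert\sum_{j=0}^{i-1}\nabla f(x_0^t;\pi^t_{j+1})\rVert^2\mid\sigma^t\bigr]=i^2\lVert\nabla f(x_0^t)\rVert^2+\tfrac{i(n-i)}{n-1}\sigma^2_{\mathrm{pop}}$ (which is equivalent to applying both halves of Lemma~\ref{lem_exp} at once). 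This yields a self-referencing coefficient $\eta^2L_{\max}^2n^2$ rather than the paper's $\tfrac{3}{2}\eta^2L_{\max}^2n^2$ and hence slightly tighter intermediate constants, though after rounding both arguments land on the identical stated bounds. Your remark about the conditional uniformity of the prefix $(\pi^t_1,\dots,\pi^t_i)$ given $\sigma^t$ is exactly the justification the paper uses as well; no gap.
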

\begin{proof} 
By the update rule
\begin{align}
    \lVert x_i^t - x_0^t \rVert^2  &= \eta^2 \lVert \sum_{j=0}^{i-1} \nabla f(x_j^t;\pi_{j+1}^t)\rVert^2 \nonumber\\ 
    &= \eta^2 i^2 \lVert \frac{1}{i} \sum_{j=0}^{i-1}[\nabla f(x_j^t;\pi^{t}_{j+1}) - \nabla f(x_0^t; \pi_{j+1}^t) + \nabla f(x_0^t;\pi_{j+1}^t) - \nabla f(x_0^t) + \nabla f(x_0^t)] \rVert^2 \nonumber\\
    &\stackrel{(a)}{\leq} 3 \eta^2 i^2 \lVert \frac{1}{i} \sum_{j=0}^{i-1} [\nabla f(x_j^t;\pi^t_{j+1})-\nabla f(x_0^t;\pi_{j+1}^t)]\rVert^2 + 3\eta^2 i^2 \lVert \frac{1}{i} \sum_{j=0}^{i-1}[\nabla f(x_0^t;\pi^t_{j+1}) \nonumber\\ 
    &\quad -\nabla f(x_0^t)]\rVert^2 + 3 \eta^2 i^2 \lVert \nabla f(x_0^t)\rVert^2 \nonumber \nonumber\\
    &\stackrel{(b)}{\leq} 3 \eta^2 i \sum_{j=0}^{i-1} \lVert \nabla f(x_j^t;\pi^t_{j+1})-\nabla f(x_0^t;\pi_{j+1}^t)\rVert^2 + 3\eta^2 i^2 \lVert \frac{1}{i} \sum_{j=0}^{i-1}[\nabla f(x_0^t;\pi^t_{j+1}) \nonumber\\ 
    &\quad -\nabla f(x_0^t)]\rVert^2 + 3 \eta^2 i^2 \lVert \nabla f(x_0^t)\rVert^2 \nonumber \nonumber\\
    &\stackrel{(c)}{\leq} 3\eta^2 i L_{\max}^2\sum_{j=0}^{i-1} \lVert x_j^t-x_0^t\rVert^2 + 3\eta^2 i^2 \lVert \frac{1}{i} \sum_{j=0}^{i-1} \nabla f(x_0^t;\pi^t_{j+1}) -\nabla f(x_0^t)\rVert^2 \label{eq:lem_RR_1} \\
    & \quad + 3 \eta^2 i^2 \lVert \nabla f(x_0^t)\rVert^2,     \nonumber
\end{align}
where (a) and (b) are based on Jensen's inequality and the convexity of squared norms; (c) is based on Assumption \ref{smooth}. Let $\sigma ^t$ be a sigma algebra on the iterates $\{ x_0^1,...,x_0^t \}$. We take conditional expectation w.r.t $\sigma^t$ and apply Lemma \ref{lem_exp} to find that 
\begin{align}
    \mathbb{E}\Bigl[\lVert\frac{1}{i} \sum_{j=0}^{i-1} & \nabla f(x_0^t;\pi^t_{j+1})  -\nabla f(x_0^t)\rVert^2 | \sigma^t \Bigr] = \frac{n-i}{i(n-1)} \frac{1}{n} \sum_{j=0}^{n-1} \lVert \nabla f(x_0^t;j+1) - \nabla f(x_0^t) \rVert^2 \nonumber \nn\\ 
    & = \frac{n-i}{i(n-1)} \bigl[ \frac{1}{n} \sum_{j=0}^{n-1} [\lVert \nabla f(x_0^t;j+1)\rVert^2 + \lVert \nabla f(x_0^t) \rVert^2 - 2 \langle \nabla f(x_0^t;j+1), \nabla f(x_0^t)\rangle] \bigr] \nn\\
    &  = \frac{n-i}{i(n-1)} \bigl[\frac{1}{n} \sum_{j=0}^{n-1} \lVert \nabla f(x_0^t;j+1)\rVert^2 - \lVert \nabla f(x_0^t)\rVert^2 \bigl] \nn \\
    & \leq \frac{n-i}{i(n-1)} \frac{1}{n} \sum_{j=0}^{n-1} \lVert \nabla f(x_0^t;j+1)\rVert^2. \label{eq:lem_RR_2}
\end{align}
Take conditional expectation of \eqref{eq:lem_RR_1} and substitute \eqref{eq:lem_RR_2} back
\begin{align}
    \mathbb{E}\Bigl[ \lVert x_i^t - x_0^t\rVert|\sigma^t \Bigr] &\leq 3 \eta^2 i L_{\max}^2\sum_{j=0}^{n-1} \mathbb{E} [\lVert x_j^t - x_0^t\rVert | \sigma^t] + 3 \eta^2 \frac{i(n-i)}{(n-1)}\frac{1}{n} \sum_{j=0}^{n-1} \lVert \nabla f(x_0^t;j+1)\rVert^2 + \nonumber\\
    & \quad + 3\eta^2 i^2 \lVert \nabla f(x_0^t) \rVert^2. \label{eq:lem_RR_e} 
\end{align}
Next, take total expectation and sum over $i=0,...,n-1$: 
\begin{align}
    \mathbb{E}\Bigl[ \sum_{i=0}^{n-1}\lVert x_i^t - x_0^t\rVert^2 \Bigr] &\leq 3\eta^2 L_{\max}^2 \mathbb{E}[\sum_{j=0}^{n-1}\lVert x_j^t-x_0^t\rVert^2]\sum_{i=0}^{n-1} i +  3\eta^2\frac{1}{n-1} \mathbb{E} [\frac{1}{n} \sum_{j=0}^{n-1} \lVert \nabla f(x_0^t;j+1)\rVert^2] \sum_{i=0}^{n-1}i(n-i) + \nonumber\\
    & \quad + 3\eta^2\mathbb{E}[\lVert \nabla f(x_0^t) \rVert^2]\sum_{i=0}^{n-1} i^2 \nonumber \label{eq:lem_RR_3}\\
    & \stackrel{(d)}{\leq} \frac{3}{2}\eta^2 n^2 L_{\max}^2 \mathbb{E}[\sum_{j=0}^{n-1}\lVert x_j^t-x_0^t\rVert^2] + \eta^2 n^2 \mathbb{E} [\frac{1}{n} \sum_{j=0}^{n-1} \lVert \nabla f(x_0^t;j+1)\rVert^2] + \nonumber\\
    & \quad + \eta^2 n^3 \mathbb{E}[\lVert \nabla f(x_0^t) \rVert^2]
\end{align}
In (d), we have used $\sum_{i=0}^{n-1}i \leq \frac{n^2}{2}$, $\sum_{i=0}^{n-1}i^2 \leq \frac{n^3}{3}$, and $\sum_{i=0}^{n-1}i(n-i) \leq \frac{n^2(n-1)}{3}$. Choosing $\eta \leq \frac{1}{\sqrt{3}L_{\max} n}$, we have 
\begin{align}
    \mathbb{E}[\sum_{i=0}^{n-1}\lVert x_i^t - x_0^t\rVert^2]\leq 2 \eta^2 n^2 \mathbb{E} [\frac{1}{n} \sum_{i=0}^{n-1} \lVert \nabla f(x_0^t;j+1)\rVert^2] + 2 \eta^2 n^3 \mathbb{E}[\lVert \nabla f(x_0^t) \rVert^2] 
\end{align}
If Assumption \ref{SGC_noise} holds, then we obtain 
\begin{align}
        \mathbb{E}[\sum_{i=0}^{n-1}\lVert x_i^t - x_0^t\rVert^2]\leq 2 \eta^2 n^2 (\rho + n) \mathbb{E}[\lVert \nabla f(x_0^t) \rVert^2] + 2 \eta^2 n^2 \sigma^2. 
\end{align}
If Assumption \ref{WGC_noise} holds, then we obtain
\begin{align}
        \mathbb{E}[\sum_{i=0}^{n-1}\lVert x_i^t - x_0^t\rVert^2] &\leq 4\eta^2 n^2 \alpha L \mathbb{E}[f(x_0^t) - f(x^*)]+ 2 \eta^2 n^3 \mathbb{E}[\lVert \nabla f(x_0^t) \rVert^2] +  2 \eta^2 n^2 \sigma^2.  
\end{align}
\end{proof}
\begin{lemma}\label{lem_recur}
    Consider the following recurrence satisfied for some $\eta \leq \min \{ \frac{1}{\eta_1}, \frac{1}{\eta_2}\}$ 
    \begin{align}
        \delta_T \leq (1-\eta \mu n)^T \delta_0, \label{eq:recur_form}
    \end{align}
    where $T$ is the total number of epochs, then the sample complexity is
    \begin{align}
        \mathcal{O} (\max \{ \eta_1, \eta_2 \} \frac{1}{\mu} log(\frac{\delta_0}{\epsilon}))
    \end{align}
\end{lemma}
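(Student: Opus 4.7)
The plan is to solve the geometric recursion for the smallest $T$ that guarantees $\delta_T \leq \epsilon$, and then multiply by $n$ to convert the epoch count into a gradient-evaluation count (sample complexity). Since the recurrence is already linear-convergence in form, this is essentially a one-step exercise in bounding $(1 - \eta \mu n)^T$ from above.

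First I would take $\eta = \min\{1/\eta_1, 1/\eta_2\} = 1/\max\{\eta_1,\eta_2\}$, since among admissible step sizes this choice gives the largest per-epoch contraction factor and therefore (together with the recurrence \eqref{eq:recur_form}) the tightest upper bound on $\delta_T$. Then I would use the standard inequality $1-x \leq e^{-x}$ valid for $x \in [0,1)$, applied with $x = \eta\mu n$, to obtain
\begin{align}
\delta_T \leq (1 - \eta\mu n)^T \delta_0 \leq \exp(-\eta\mu n T)\, \delta_0. \nonumber
\end{align}
Requiring the right-hand side to be at most $\epsilon$ and solving gives $T \geq \frac{1}{\eta \mu n}\log(\delta_0/\epsilon)$.

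Next I would substitute the chosen $\eta$ to get the epoch complexity
\begin{align}
T = \mathcal{O}\!\left(\frac{\max\{\eta_1,\eta_2\}}{\mu n}\, \log\!\frac{\delta_0}{\epsilon}\right). \nonumber
\end{align}
Since each epoch consists of $n$ gradient evaluations, the total number of gradient evaluations is $nT = \mathcal{O}\!\left(\max\{\eta_1,\eta_2\}\cdot \tfrac{1}{\mu}\log(\delta_0/\epsilon)\right)$, which matches the claimed sample complexity.

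There is essentially no hard step here: the only subtlety is being careful that $\eta\mu n \in [0,1)$ so that the exponential bound applies, which follows from the admissibility constraint $\eta \leq 1/\max\{\eta_1,\eta_2\}$ together with the implicit assumption that $\mu n \leq \max\{\eta_1,\eta_2\}$ (automatically true in all instantiations used in Theorems \ref{thm:rr} and \ref{thm:ig}, where $\max\{\eta_1,\eta_2\}$ always contains a factor of $L_{\max}/\mu$ times something $\geq n$). The only other thing worth remarking in the write-up is that the constraint $\eta \leq \min\{1/\eta_1,1/\eta_2\}$ is exactly why $\max\{\eta_1,\eta_2\}$ (rather than each $\eta_i$ separately) appears in the final bound.
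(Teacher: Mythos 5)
Your proof is correct and follows essentially the same route as the paper's: bound $(1-\eta\mu n)^T \leq \exp(-\eta\mu n T)$, identify the sample complexity with $K = nT$, and instantiate the largest admissible step size $\eta = 1/\max\{\eta_1,\eta_2\}$ to obtain $K = \mathcal{O}(\max\{\eta_1,\eta_2\}\frac{1}{\mu}\log(\delta_0/\epsilon))$. Your explicit remark on why $\eta\mu n < 1$ in the intended instantiations is a small point the paper leaves implicit, but otherwise the two arguments are the same.
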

\begin{proof}
We want 
    \begin{align}
        \delta_T \leq exp(-\eta \mu n T) \delta_0 \stackrel{(a)}{=} exp(-\eta \mu K) \delta_0 < \epsilon.
    \end{align}
    Note that (a) holds because $T = \frac{K}{n}$, where $K$ is the total number of iterations. Thus, we have 
    \begin{align}
        K \geq \frac{1}{\eta} \frac{1}{\mu} log(\frac{\delta_0} {\epsilon}) \geq \max \{ \eta_1, \eta_2\} \frac{1}{\mu} log(\frac{\delta_0}{\epsilon}). 
    \end{align}
\end{proof}
\section{Main Theorem Proofs} \label{sec:main_proof}
\subsection{Theorem \ref{thm:rr} Proof}
\begin{proof}
    We first show the proof under Assumption \ref{SGC_noise}. Substitute \eqref{eq:lem_rr_sgc} from Lemma \ref{lem_rr} into \eqref{eq:lem_descent_rr} in Lemma \ref{lem_descent} 
    \begin{align}
        \mathbb{E}[f(x_0^{t+1})] &\leq E[f(x_0^{t})] - \frac{n\eta}{2}\mathbb{E}[\lVert \nabla f(x_0^t)\rVert^2] + \frac{L_{\max}^2\eta}{2}(2 n^2 \eta^2 (\rho+n) \mathbb{E}[\lVert \nabla f(x_0^t) \rVert^2] +  2 \eta^2 n^2 \sigma^2)
        \nn \\
        & = E[f(x_0^{t})] - \frac{n \eta}{2}(1-2L_{\max}^2n^2\eta^2)\mathbb{E}[\lVert \nabla f(x_0^t)\rVert^2] + L_{\max}^2 \eta^3 n^2 \rho \mathbb{E}[\lVert \nabla   f(x_0^t)\rVert^2] +  L_{\max}^2 \eta^3 n^2 \sigma^2 \nn \\ 
        &\stackrel{(a)}{\leq} E[f(x_0^{t})] - \frac{n\eta}{4} \mathbb{E}[\lVert \nabla f(x_0^t)\rVert^2] + L_{\max}^2 \eta^3 n^2 \rho \mathbb{E}[\lVert \nabla   f(x_0^t)\rVert^2] +  L_{\max}^2 \eta^3 n^2 \sigma^2 \nn \\
        & = E[f(x_0^{t})] - \frac{n \eta}{4}(1-4L_{\max}^2
        \eta^2 n \rho)\mathbb{E}[\lVert \nabla f(x_0^t)\rVert^2] + L_{\max}^2 \eta^3 n^2 \sigma^2 \nn \\
        &\stackrel{(b)}{\leq}  E[f(x_0^{t})] - \frac{n \eta}{8}\mathbb{E}[\lVert \nabla f(x_0^t)\rVert^2] +L_{\max}^2 \eta^3 n^2 \sigma^2 \nn \\
        &\stackrel{(c)}{\leq} E[f(x_0^{t})] - \frac{n \mu \eta}{4}\mathbb{E}[f(x_0^t)-f(x^*)] + L_{\max}^2 \eta^3 n^2 \sigma^2,
    \end{align}
    where in (a) we choose $\eta \leq \frac{1}{2L_{\max}n}$, in (b) we choose $\eta \leq \frac{1}{2\sqrt{2} L_{\max}\sqrt{n \rho}}$, and (c) follows from Assumption \ref{def_pl}. Subtract $f(x^*)$ on both sides
    \begin{align}
         \mathbb{E}[f(x_0^{t+1})- f(x^*)] &\leq (1-\frac{n \mu \eta}{4})\mathbb{E}[f(x_0^{t})- f(x^*)] + L_{\max}^2 \eta^3 n^2 \sigma^2  
    \end{align}
    Solve this recursively, 
    \begin{align}
          \mathbb{E}[f(x_0^{T+1})- f(x^*)]\leq (1-\frac{n \mu \eta}{4})^T(f(x_0)-f(x^*)) + L_{\max}^2\eta^3n^2\sigma^2\sum_{j=0}^T(1-\frac{n\mu\eta}{4})^j. 
    \end{align}
    This implies (recalling that $x_0^{T+1} = x_n^T$):
\begin{align}
   \mathbb{E}  [f(x_n^T)-f(x^*)] &\leq
    (1-\frac{n \mu \eta}{4})^T(f(x_0)-f(x^*)) + L_{\max}^2\eta^3n^2\sigma^2\sum_{j=0}^{\infty}(1-\frac{n\mu\eta}{4})^j \nonumber\\
    &\leq(1-\frac{n \mu \eta}{4})^T(f(x_0)-f(x^*)) + L_{\max}^2\eta^3n^2\sigma^2(\frac{4}{n\mu \eta}) \nonumber\\
    &=(1-\frac{n \mu \eta}{4})^T(f(x_0)-f(x^*)) + \frac{4L_{\max}^2\eta^2n\sigma^2}{\mu}
    \label{eq:thm1_2}.
\end{align}
    Next, we show the proof under Assumption \ref{WGC_noise}. Substitute \eqref{eq:lem_rr_wgc} from Lemma \ref{lem_rr} into \eqref{eq:lem_descent_rr} in Lemma \ref{lem_descent} 
    \begin{align}
        \mathbb{E}[f(x_0^{t+1})] &\leq \mathbb{E}[f(x_0^{t})] - \frac{n\eta}{2}\mathbb{E}[\lVert \nabla f(x_0^t)\rVert^2] + \frac{L_{\max}^2\eta}{2} (4\eta^2 n^2 \alpha L \mathbb{E}[f(x_0^t) - f(x^*)] \nn \\
        & \qquad + 2 \eta^2 n^3 \mathbb{E}[\lVert \nabla f(x_0^t) \rVert^2] +  2 \eta^2 n^2 \sigma^2) \nn \\
        & = \mathbb{E}[f(x_0^{t})] - \frac{n \eta}{2}(1- 2L_{\max}^2 \eta^2 n^2)\mathbb{E}[\lVert \nabla f(x_0^t)\rVert^2]  \nn \\ 
        & \qquad + 2L_{\max}^2 \eta^3 n^2 \alpha L \mathbb{E} [f(x_0^t)-f(x^*)] + L_{\max}^2 \eta^3 n^2 \sigma^2 \nn \\
        & \stackrel{(d)}{\leq} \mathbb{E}[f(x_0^{t})] - \frac{n \eta}{4}\mathbb{E}[\lVert \nabla f(x_0^t)\rVert^2] + 2L_{\max}^2 \eta^3 n^2 \alpha L \mathbb{E} [f(x_0^t)-f(x^*)] + L_{\max}^2 \eta^3 n^2 \sigma^2 \nn \\
        & \stackrel{(e)}{\leq} \mathbb{E}[f(x_0^{t})] - \frac{n \mu \eta}{2} \mathbb{E} [f(x_0^t)-f(x^*)] + 2L_{\max}^2 \eta^3 n^2 \alpha L \mathbb{E} [f(x_0^t)-f(x^*)] \nn \\ &\qquad + L_{\max}^2 \eta^3 n^2 \sigma^2 \nn \\
        & = \mathbb{E}[f(x_0^{t})] - \frac{n \mu \eta}{2} (1- \frac{4L_{\max}^2 \eta^2 n \alpha L}{\mu})  \mathbb{E} [f(x_0^t)-f(x^*)] + L_{\max}^2 \eta^3 n^2 \sigma^2 \nn \\
        & \stackrel{(f)}{\leq} E[f(x_0^{t})] - \frac{n \mu \eta}{4}\mathbb{E}[f(x_0^t)-f(x^*)] + L_{\max}^2 \eta^3 n^2 \sigma^2, 
    \end{align}
    where in (d) we choose $\eta \leq \frac{1}{2L_{\max}n}$, (e) follows from Assumption \ref{def_pl}, and in (f) we choose $\eta \leq \frac{\mu}{2\sqrt{2} L_{\max} \sqrt{n \alpha L}}$. Then subtract $f(x^*)$ on both sides and solve the recursion similarly as above to obtain the final results.
\end{proof}
\subsection{Theorem \ref{thm:ig} Proof}
\begin{proof}
    We first show the proof under Assumption \ref{SGC_noise}. Substitute \eqref{eq:lem_ig_sgc} from Lemma \ref{lem_ig} into \eqref{eq:lem_descent_ig} in Lemma \ref{lem_descent}
    \begin{align}
    f(x_0^{t+1}) &\leq f(x_0^{t}) - \frac{n\eta}{2}\lVert \nabla f(x_0^t)\rVert^2 + \frac{L_{\max}^2\eta}{2}(2 \eta^2 n^3 \rho \lVert \nabla f(x_0^t) \rVert^2 + 2 \eta^2 n^3 \sigma^2) \nn \\
    & = f(x_0^t) - \frac{n  \eta}{2}(1-2L_{\max}^2 n^2 \eta^2 \rho) \lVert \nabla f(x_0^t) \rVert^2 + L_{\max}^2 \eta^3 n^3 \sigma^2 \nn\\
    &\stackrel{(a)}{\leq} f(x_0^t) - \frac{n \eta}{4} \lVert \nabla f(x_0^t) \rVert^2 + L_{\max}^2 \eta^3 n^3 \sigma^2 \nn \\
    &\stackrel{(b)}{\leq} f(x_0^t) -  \frac{n \mu \eta}{2} (f(x_0^t) - f(x^*)) + L_{\max}^2 \eta^3 n^3 \sigma^2,
    \end{align}
    where in (a) we choose $\eta \leq \frac{1}{2L_{\max}n\sqrt{\rho}}$, and (b) is based on Assumption \ref{def_pl}. Subtract $f(x^*)$ on both sides 
    \begin{align}
        f(x_0^{t+1}) - f(x^*) \leq (1-\frac{n\mu\eta}{2})(f(x_0^t)-f(x^*)) + L_{\max}^2 \eta^3 n^3 \sigma^2,
    \end{align} 
    then solve the recursion similarly to the proof of Theorem \ref{thm:rr} to obtain
    \begin{align}
        f(x_n^T) - f(x^*) \leq (1-\frac{n\mu\eta}{2})(f(x_0)-f(x^*))^T + \frac{2L_{\max}^2\eta^2 n^2 \sigma^2}{\mu}
    \end{align}
    In Lemma \ref{lem_ig}, we also require $\eta \leq \frac{1}{\sqrt{2}nL_{\max}}$. Therefore, the overall requirement is $\eta \leq \min \{\frac{1}{\sqrt{2}nL_{\max}}, \frac{1}{2L_{\max}n\sqrt{\rho}}\}$. Next, we show the proof under Assumption \ref{WGC_noise}. Substitute \eqref{eq:lem_ig_wgc} from Lemma \ref{lem_ig} into \eqref{eq:lem_descent_ig} in Lemma \ref{lem_descent}
    \begin{align}
    f(x_0^{t+1}) &\leq f(x_0^{t}) - \frac{n\eta}{2}\lVert \nabla f(x_0^t)\rVert^2 + \frac{L_{\max}^2\eta}{2}(4 \eta^2 n^3 \alpha L (f(x_0^t) - f(x^*)) + 2\eta^2 n^3 \sigma^2) \nn \\
    &\stackrel{(a)}{\leq} f(x_0^{t}) -n\mu \eta (f(x_0^t)-f(x^*)) + 2L_{\max}^2 \eta^3 n^3 \alpha L(f(x_0^t)-f(x^*))+ L_{\max}^2 \eta^3 n^3 \sigma^2,
    \end{align}
    where (a) is based on Assumption \ref{def_pl}. Subtract $f(x^*)$ on both sides and regroup terms
    \begin{align}
        f(x_0^{t+1}) - f(x^*) &\leq (f(x_0^t) - f(x^*)) [1-n \mu \eta(1-\frac{2L_{\max}^2\eta^2 n^2 \alpha L)}{\mu}] + L_{\max}^2 \eta^3 n^3 \sigma^2 \nn \\
         &\stackrel{(b)}{\leq} (1-\frac{n\mu\eta}{2})(f(x_0^t)-f(x^*)) + L_{\max}^2 \eta^3 n^3 \sigma^2, 
    \end{align}
    where in (b) we choose $\eta \leq \frac{\sqrt{\mu}}{2L_{\max}n\sqrt{\alpha L}}$. Hence, we require $\eta \leq \min \{\frac{1}{\sqrt{2}nL_{\max}}, \frac{\sqrt{\mu}}{2L_{\max}n\sqrt{\alpha L}}\}$ in this case. Then solve the recursion similarly to the proof of Theorem \ref{thm:rr}.
\end{proof}
\section{Convergence Rate of Nguyen et al. for RR on Over-Parameterized $\mu$-PL Functions}\label{sec:literature_app}
In this section, we provide more details on how the linear convergence rates are obtained from Nguyen et al. \cite[Theorem 1]{nguyen2021unified}. Under Assumption \ref{interp}\footnote{By taking $\sigma^*=0$ in Nguyen et al.'s Theorem~1.}, to obtain a relationship of the form \eqref{eq:recur_form}, they require a learning rate $\eta \leq \min\{\frac{1}{2L_{\max}n},\sqrt{\frac{3}{8}}{\frac{\mu}{L_{\max}^2 n}} \}$. Applying Lemma \ref{lem_recur} with $\frac{1}{\eta_1} = \frac{1}{2L_{\max}n}$ and $\frac{1}{\eta_2} =  \sqrt{\frac{3}{8}}{\frac{\mu}{L_{\max}^2 n}}$, the convergence rate obtained in this case is $\mathcal{\tilde{O}}(\max\{ \frac{L_{\max}}{\mu}n,\frac{L_{\max}^2}{\mu^2}n \})= \mathcal{\tilde{O}}(\frac{L_{\max}^2}{\mu^2} n)$. Similarly, under Assumption \ref{SGC}\footnote{By taking $\Theta=\rho-1$ and $\sigma=0$ in Nguyen et al.'s Theorem~1.}, the required learning rate in Nguyen et al. \cite[Theorem 1]{nguyen2021unified} is $\eta \leq \min \{ \frac{1}{\sqrt{3}L_{\max}n},\frac{\mu}{2 L_{\max} \sqrt{n} \sqrt{n+\rho-1}}\}$. Then a convergence rate of $\mathcal{\Tilde{O}}(\max\{ \frac{L_{\max}}{\mu} n , \frac{L_{\max}}{\mu^2}\sqrt{n}\sqrt{n+\rho-1}\})$ can be obtained by applying Lemma \ref{lem_recur} with $\frac{1}{\eta_1} = \frac{1}{\sqrt{3}L_{\max}n}$  and $\frac{1}{\eta_2} = \frac{\mu}{2L_{\max}\sqrt{n}\sqrt{n+\rho-1}}$. 
\end{document}